\documentclass[10pt,journal,compsoc]{IEEEtran}
\usepackage{cite}
\usepackage{amsmath,amssymb,amsfonts,amsthm}
\usepackage{algorithm}
\usepackage{algpseudocode}
\makeatletter
\renewcommand{\theALG@line}{\thealgorithm.\arabic{ALG@line}}
\renewcommand{\theHALG@line}{\thealgorithm.\arabic{ALG@line}}
\makeatother
\usepackage{array}
\usepackage{booktabs}
\usepackage{textcomp}
\usepackage{stfloats}
\usepackage{url}
\usepackage{graphicx}
\usepackage{balance}
\usepackage{bm}
\usepackage{xcolor}
\usepackage{multirow}
\usepackage[backref=page]{hyperref}
\usepackage{cleveref}
\newcommand{\R}{\mathbb{R}}

\newcommand{\xclean}{\bm{x}}
\newcommand{\xadv}{\bm{x}^{adv}}
\newcommand{\ytrue}{y}
\newcommand{\deltaadv}{\bm{\delta}}
\newcommand{\vdual}{\bm{v}}
\newcommand{\loss}{J}
\newcommand{\proj}{\mathcal{P}}
\newcommand{\Q}{\mathbf{Q}}
\newcommand{\Qone}{\mathbf{Q}_1}

\newcommand{\Dp}[1]{\,{\scriptsize(#1)}}

\def\BibTeX{{\rm B\kern-.05em{\sc i\kern-.025em b}\kern-.08em
    T\kern-.1667em\lower.7ex\hbox{E}\kern-.125emX}}

\newtheorem{theorem}{Theorem}
\newtheorem{proposition}{Proposition}
\newtheorem{assumption}{Assumption}

\theoremstyle{remark}
\newtheorem{remark}{Remark}

\begin{document}

\title{
Pareto-Improving Adversarial Attacks with Primal-Dual Regularization
}

\author{Yang~Dai,
        Longfei~Zhang,
        Wei~Tao,
        Li~Shen,
        Jincai~Huang,
        and~Qing~Tao
\IEEEcompsocitemizethanks{
\IEEEcompsocthanksitem Y.~Dai, L.~Zhang, W.~Tao and J.~Huang are with Laboratory for Big Data and Decision, National University of Defense Technology, Changsha 410073, China (e-mail: \{daiyang2000, zhanglongfei, taowei, huangjincai\}@nudt.edu.cn).
\IEEEcompsocthanksitem W.~Tao is also with Academy of Military Science, Beijing 100091, China. Corresponding author.
\IEEEcompsocthanksitem L.~Shen is with School of Cyber Science and Technology, Shenzhen Campus of Sun Yat-sen University, Shenzhen 518107, China (e-mail: mathshenli@gmail.com).
\IEEEcompsocthanksitem Q.~Tao is with Hefei Institute of Technology, Hefei 230009, China (e-mail: taoqing@gmail.com).
}}

\markboth{}%
{Dai \MakeLowercase{\textit{et~al.}}: Pareto-Improving Adversarial Attacks with Primal-Dual Regularization}

\maketitle

\begin{abstract}
Transferable adversarial attacks are arguably the most practical black-box threat model. Under the same perturbation budget, stronger transfer attacks attain higher attack success rate (ASR), yet their imperceptibility also tends to degrade. Under such a fixed-budget protocol, transferability and imperceptibility therefore appear to trade off against each other. We argue that this conflict is an artifact of fixed-budget evaluation, not an intrinsic trade-off. When attacks are compared on the ASR--imperceptibility Pareto frontier obtained by sweeping $\epsilon$, stronger transfer attacks already attain better imperceptibility at matched ASR than weaker ones. To exploit this latent advantage, we introduce the stealthy transfer attack ST, a plug-in primal-dual wrapper that adds an $L_\infty$ saturation regularizer to the standard constrained objective and resolves it through a two-step primal-dual update: a projected primal step on the perturbation coupled with an $L_1$-ball projection on a dual variable that absorbs the regularizer through Fenchel duality, requiring no auxiliary models or handcrafted perceptual priors. Empirically, ST extends the Pareto frontier across different base attacks and additional surrogate architectures. At $\epsilon{=}16/255$, average imperceptibility gains over each base attack are $17\%$ on LPIPS and $14\%$ on NIQE while ASR is preserved or improved. At matched high-ASR levels, the strongest ST variants further Pareto-dominate dedicated stealth-oriented transfer attacks, confirming that the latent imperceptibility advantage of strong transfer attacks can be unlocked by a primal-dual optimization wrapper without sacrificing transferability. Code will be made available at \url{https://github.com/AndssY/ST}.

\end{abstract}

\begin{IEEEkeywords}
Adversarial examples, transferable adversarial attacks, imperceptibility, primal-dual optimization, sparse regularization.
\end{IEEEkeywords}

\section{Introduction}
\IEEEPARstart{D}{eep} neural networks are vulnerable to adversarial perturbations~\cite{szegedy2014intriguing}~\cite{wu2024aml}. Small changes to a clean input can flip a classifier's prediction with high confidence, even though the change is invisible. Studying such attacks at scale is therefore central to understanding and hardening the decision boundaries of deployed models. Concretely, an adversarial attack is a constrained optimization problem: maximize a prediction loss with respect to a perturbation $\delta$ added to a clean input $x$, subject to an imperceptibility constraint that confines $\delta$ to an $L_\infty$-ball of radius $\epsilon$ and keeps $x+\delta$ in the valid pixel range. 

Among adversarial attacks, transfer attacks are arguably the most practical threat model, as they craft the adversarial example on a surrogate model, relying on the transferability to attack unseen target models. Strong transfer attacks therefore concentrate on the surrogate-side ascent direction, stabilizing it with momentum~\cite{dong2018mifgsm}, diversifying it with input transformations~\cite{xie2019dim,dong2019tim,lin2020nifgsm}, or reshaping it through neighborhood sampling~\cite{qiu2026mef,guo2025ops}. These choices substantially raise the transferability (measured by attack success rate, ASR) while staying inside the same attack budget $\epsilon$.

\begin{figure}[t]
\centering
    \includegraphics[width=\columnwidth]{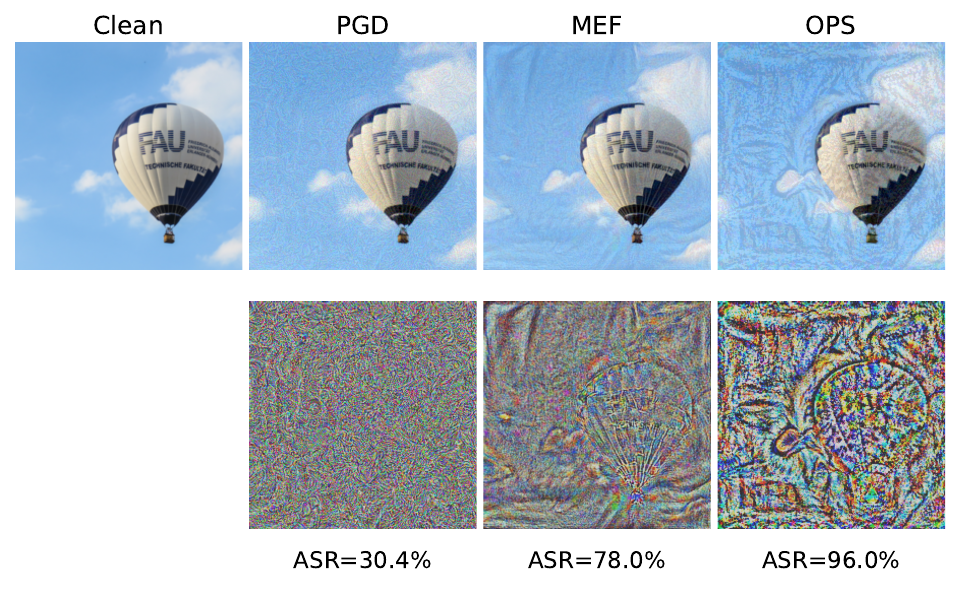}%
\caption{Comparison at the standard $L_\infty{=}16/255$ budget. The adversarial perturbation visualized as $5\times\delta+0.5$. PGD is visually clean but transfers poorly, whereas MEF and OPS reach much higher ASR at the cost of visible saturation artifacts.}
\label{fig:teaser}
\end{figure}

Higher ASR under the same budget does not come for free. The stronger the attack, the worse its imperceptibility. Figure~\ref{fig:teaser} shows this on a representative sample: as ASR rises from PGD to OPS, image quality progressively deteriorates. A recent systematic evaluation~\cite{zhao2025revisiting} confirms this pattern at scale, reporting that almost all representative transfer attacks yield worse imperceptibility than a plain PGD baseline across a panel of complementary perceptual metrics, and calls for imperceptibility to receive much more attention. Read under the standard fixed-$\epsilon$ protocol, transferability and imperceptibility therefore appear to trade off against each other.

This calls for a different lens on the question. Since transferability (measured by ASR) and imperceptibility are two competing objectives, the natural lens is the ASR--imperceptibility Pareto frontier, where each attack is characterized by the full operating curve it traces as the budget $\epsilon$ varies rather than by a single fixed-$\epsilon$ point. One attack Pareto-dominates another if it attains better imperceptibility at the same ASR (or, equivalently, higher ASR at matched imperceptibility). Besides, imperceptibility is hard to quantify through any single stealth metric, so we evaluate Pareto dominance across complementary metric families (full-reference, no-reference, and distributional) rather than any single index.

This Pareto-frontier lens reveals a counterintuitive picture (\cref{subsec:pareto}): stronger attacks already weakly dominate weaker ones. At matched ASR, stronger attacks score better on almost all imperceptibility metrics. The apparent conflict observed at fixed $\epsilon$ is therefore a snapshot artifact, not an intrinsic conflict between transferability and imperceptibility. This points to a different kind of imperceptibility-enhancing attack that preserves the strong base attack's direction-generation pipeline and only modulates how that direction is projected into the next perturbation iterate. Existing stealth-oriented transfer attacks~\cite{hu2024came,du2023ps,guesmi2025tesser,chen2024diffattack,li2024advad,wang2024egstssa} take the opposite path. They augment the base attack with auxiliary signals such as class-activation maps, JND priors, or pretrained diffusion generators, which couple the imperceptibility mechanism to a particular model class and sacrifice modularity. 

We introduce the stealthy transfer attack ST, which is a plug-in primal-dual wrapper that modifies only the update step of a compatible base attack. Concretely, ST adds an $L_\infty$ saturation regularizer to the standard constrained objective and handles it through a two-step primal-dual update: a projected primal step on the perturbation, coupled with an $L_1$-ball projection on a dual variable that absorbs the regularizer through Fenchel duality. The wrapper preserves the base attack's gradient or momentum construction, intervenes only when forming the next perturbation iterate, and adds no auxiliary models, no handcrafted perceptual priors, and no architectural change to the surrogate. 

Empirically, ST extends the ASR--imperceptibility Pareto frontier of seven representative base attacks. Taking STMI versus MI-FGSM as the canonical example, the wrapper maintains competitive ASR while simultaneously improving all reported imperceptibility metrics, reducing LPIPS by $31\%$ and NIQE by $21\%$ and raising MS-SSIM by $3\%$ and TOPIQ-FR by $16\%$. Comparable shifts hold across additional surrogate architectures. Together, these results show that the latent imperceptibility advantage of strong transfer attacks can be unlocked by a primal-dual optimization wrapper without sacrificing transferability.

This paper makes three contributions:
\begin{itemize}
\item \textbf{A Pareto-frontier perspective on transferability vs.\ imperceptibility.} We propose to evaluate transfer attacks on the Pareto frontier rather than at a single fixed perturbation budget, and show that under this lens stronger transfer attacks already weakly dominate weaker ones at matched ASR (\cref{subsec:pareto}).
\item \textbf{ST, a plug-in primal-dual wrapper.} We introduce ST, which modifies only the projection step of a compatible base attack: an added $L_\infty$ saturation regularizer is routed to the dual side through Fenchel
conjugacy and resolved by a single-loop primal-dual update, adding negligible runtime overhead and requiring no auxiliary modules or perceptual priors.
\item \textbf{Broad empirical validation.} We evaluate ST across seven representative base attacks, multiple surrogate architectures, and three stealth-oriented transfer attack baselines. ST extends the Pareto frontier
of every tested base attack (\cref{tab:cross_attack_eps16}, \cref{fig:pareto_stmi}, \cref{fig:pareto_six}); at matched ASR, the strongest ST variants Pareto-dominate the stealth-oriented competitors (\cref{fig:q4_pareto_vit}).
\end{itemize}

\section{Related Work}
\label{sec:related}

Transferable adversarial attacks craft adversarial examples on surrogate models and evaluate them on unseen target models, typically under the standard $L_\infty$ threat model. Most transfer attacks differ in how they estimate, stabilize, or transform the ascent direction under the same perturbation budget. However, the standard $L_\infty$ constraint alone does not ensure imperceptibility, since $L_p$ proximity is insufficient to characterize human perceptual similarity, and adversarial examples under the same norm budget may still differ substantially in visual quality~\cite{sharif2018suitability,zhao2020percal,liu2024stealthiness}. We therefore briefly review representative transfer attacks and then focus on work that explicitly studies imperceptibility and stealthy transfer. Additional notes on stealth evaluation are deferred to Appendix~\ref{subsec:appendix_stealth_eval}.

\textit{a) Transfer attacks.}
Representative transfer attacks and transfer-enhancement strategies include MI-FGSM~\cite{dong2018mifgsm}, MEF~\cite{qiu2026mef}, VMI-FGSM~\cite{wang2021vmi}, EMI-FGSM~\cite{wang2021emi}, GRA~\cite{zhu2023gra}, DIM~\cite{xie2019dim}, TIM~\cite{dong2019tim}, SIM~\cite{lin2020nifgsm}, SGM~\cite{wu2020skip}, RAP~\cite{qin2022rap}, and OPS~\cite{guo2025ops}. These methods all act on the \emph{ascent direction}, e.g., by stabilizing it with momentum, averaging it over input transformations, or reshaping it through neighborhood sampling, and then close the iteration with a standard sign-step plus $L_\infty$ projection. ST is orthogonal in scope: it leaves the ascent direction untouched and changes only the update/projection step that converts that direction into the next iterate, so it can be inserted as a modular wrapper on any of these attacks without altering their direction-generation pipeline.

\textit{b) Stealth-oriented attacks.}
Stealth-oriented adversarial attacks typically improve perceptual quality from two directions. The first uses sparse or structured perturbation support, as in EAD~\cite{chen2018ead}, SparseFool~\cite{modas2019sparsefool}, StrAttack~\cite{xu2019strattack}, $L_1$-APGD~\cite{croce2021l1apgd}, SAIF~\cite{imtiaz2025saif}, SAPF~\cite{fan2020sapf}, and GSE~\cite{zimmer2023gse}. The second replaces or augments pixelwise norm constraints with perceptual priors, as in perceptual color distance~\cite{zhao2020percal} and semantic-similarity wavelet attacks~\cite{luo2022ssah}. These works show that perturbation structure and perceptual priors can materially affect visual noticeability. However, they are primarily evaluated under white-box or minimum-norm settings on a fixed surrogate-target pair, and the question of whether their imperceptibility advantages survive when the adversarial example must transfer to an unseen target model has received much less attention.

\textit{c) Stealthy transfer attacks.}
Stealthy transfer attacks aim to improve both black-box transferability and imperceptibility under the same perturbation budget. Existing methods mainly improve imperceptibility through three routes. The first introduces spatial or perceptual priors into the perturbation design. CAME~\cite{hu2024came} uses ensemble CAM weighting to guide spatial allocation, PS Attack~\cite{du2023ps} incorporates a precomputed JND prior, and TESSER~\cite{guesmi2025tesser} regularizes perturbations with Gaussian spectral smoothing. The second exploits structured sparse generation. EGS-TSSA~\cite{wang2024egstssa} employs a trained generator to learn structured sparse perturbations. The third moves the attack process into a generative latent space. DiffAttack~\cite{chen2024diffattack} perturbs diffusion latent representations, while AdvAD~\cite{li2024advad} models attacking as a non-parametric diffusion process. These methods improve imperceptibility through perceptual or generative priors, but usually at the cost of reduced modularity and higher attack-specific complexity. For instance, EGS-TSSA requires saliency-map computation per iteration, and DiffAttack runs a full diffusion model in the loop—both incur $\sim$10$\times$ generation cost relative to the base attack and tie the imperceptibility mechanism to a particular model class. ST replaces only the projection step, at a negligible runtime overhead per iteration relative to the surrogate's forward and backward passes.

Relatedly, Zhao~\emph{et al.}~\cite{zhao2025revisiting} revisit the evaluation side of this problem by benchmarking 23 representative transfer attacks under the standard $L_\infty{=}16/255$ protocol. They report that many strong transfer attacks appear less imperceptible than a plain PGD baseline under fixed-budget snapshots, and argue that a single $L_\infty$ budget is insufficient for fair perceptual comparison. This critique is primarily evaluative rather than algorithmic, and is closely aligned with the matched-ASR Pareto-frontier perspective adopted in Sec.~\ref{sec:experiments}.

\section{Method}
\label{sec:method}

ST is defined at the level of the perturbation update. The base attack still produces an ascent direction $\bm{u}_t$; ST changes only how that direction is converted into the next iterate. We first write the standard transfer attack objective, and then introduce the regularized formulation behind the wrapper.

Given a classifier $f_{\theta}$ parameterized by $\theta$, crafting a non-targeted adversarial example $\xadv$ from a clean example $\xclean$ can be written as
\begin{equation}
\label{eq:st_attack_x}
\max \loss(f_{\theta}(\xadv), \ytrue),
\quad
\text{s.t.}\quad
\|\xadv-\xclean\|_{\infty} \le \epsilon,
\end{equation}
where $\loss(\cdot,\cdot)$ denotes the classification loss, $\ytrue$ is the ground-truth label, and $\epsilon$ is the perturbation budget. For convenience, we denote the perturbation by $\deltaadv=\xadv-\xclean$ and write the feasible set as
\begin{equation}
\label{eq:st_q}
\Q=\left\{\deltaadv:\|\deltaadv\|_{\infty}\le\epsilon,\; \xclean+\deltaadv\in[0,1]^d\right\}.
\end{equation}
The attack problem becomes
\begin{equation}
\label{eq:st_standard_problem}
\min_{\deltaadv} -J(\deltaadv),
\quad
\text{s.t.}\quad
\deltaadv\in\Q,
\end{equation}
where $J(\deltaadv)=\loss(f_{\theta}(\xclean+\deltaadv),\ytrue)$. The set $\Q$ enforces both the perturbation budget and the image-range constraint, but treats every feasible perturbation as equivalent at the objective level even though feasible perturbations can differ markedly in visual quality. In particular, gradient-driven optimizers tend to saturate $\deltaadv$ near the $\pm\epsilon$ boundary on a substantial fraction of coordinates, which produces the visible color speckle and edge ringing flagged in \cref{fig:teaser}.

ST therefore adds an explicit penalty on this peak-budget saturation. Because the visible artifacts are concentrated on coordinates that hit $\pm\epsilon$, we penalize the largest absolute coordinate of $\deltaadv$ directly, i.e., the $L_\infty$ norm itself, rather than a global magnitude such as $\|\deltaadv\|_2$ or a perceptual prior that would require an auxiliary network. We therefore consider
\begin{equation}
\label{eq:st_regularized_problem}
\min_{\deltaadv}\; -J(\deltaadv) + \lambda\|\deltaadv\|_{\infty},
\quad
\text{s.t.}\quad
\deltaadv\in\Q,
\end{equation}
where $\lambda\ge 0$ controls the strength of the additional regularization; we sweep $\bar\lambda \in [0.1, 5.0]$ in \cref{subsec:ablation}.

To expose the structure more clearly, we rewrite \eqref{eq:st_regularized_problem} as
\begin{equation}
\label{eq:st_main_problem}
\min_{\deltaadv}\; -J(\deltaadv) + \lambda\|\deltaadv\|_{\infty} + \iota_{\Q}(\deltaadv),
\end{equation}
where $\iota_{\Q}$ is the indicator function of $\Q$. The objective now contains two non-smooth terms, $\lambda\|\deltaadv\|_{\infty}$ and $\iota_{\Q}(\deltaadv)$. ST handles them through a primal-dual reformulation rather than combining them inside a single primal proximal step.

Let $\Qone$ denote the $L_1$ ball
\begin{equation}
\label{eq:st_q1}
\Qone=\left\{\bm{z}:\|\bm{z}\|_1\le\lambda\right\}.
\end{equation}
By Fenchel duality, the support function of the $L_1$ ball equals the $L_\infty$ norm,
\begin{equation}
\lambda\|\deltaadv\|_{\infty}=\max_{\bm{z}\in\Qone}\langle \deltaadv,\bm{z}\rangle.
\end{equation}
Substituting this identity into \eqref{eq:st_main_problem} converts the non-smooth $L_\infty$ regularizer into the saddle problem
\begin{equation}
\label{eq:st_saddle}
\min_{\deltaadv}\max_{\bm{z}\in\Qone}\,\bigl[-J(\deltaadv) + \iota_{\Q}(\deltaadv) + \langle \deltaadv, \bm{z}\rangle\bigr],
\end{equation}
which separates the two non-smooth terms onto different variables: the feasibility indicator $\iota_{\Q}$ stays on the primal side, while the regularizer is now carried by the constraint $\bm{z}\in\Qone$ on the dual side. This separation is the design choice that lets us avoid combining $\iota_{\Q}$ and $\lambda\|\cdot\|_{\infty}$ inside a single primal proximal step, which would have no closed form.

Problem~\eqref{eq:st_saddle} fits the primal-dual splitting of Condat~\cite{condat2013primaldual} and V\~{u}~\cite{vu2013splitting} with identity coupling $K{=}I$, and the resulting proximal maps reduce to two closed-form Euclidean projections, $\proj_{\Q}$ and $\proj_{\Qone}$ (Proposition~\ref{prop:prox_maps}). The full derivation is given in Appendix~\ref{sec:appendix_theory}. Within the broader family of ADMM and primal-dual methods studied for machine learning~\cite{lin2011ladmap,lin2022admm_book}, the Condat--V\~{u} splitting is convenient here because our $L_\infty$-saturation regularizer admits a closed-form $L_1$-ball projection without further linearization.

In the practical implementation, the dual-ball radius is specified through a dimension-free scalar $\bar\lambda$ and the absolute radius entering $\Qone$ is recovered as $\lambda=\bar\lambda\,d$, where $d=C\times H\times W$ is the perturbation dimensionality (e.g., $d=3\times 224\times 224$ on ImageNet). This convention keeps the per-coordinate budget $\lambda/d=\bar\lambda$ constant across input resolutions, so a single $\bar\lambda$ value transfers between datasets without rescaling. Throughout the paper, the regularization-strength values reported in figures and tables refer to $\bar\lambda$; the appendix details the convention.

Let $\bm{u}_t \in \R^d$ denote the ascent direction produced by the base attack at iteration $t$, and let $\vdual_t$ denote a dual variable that tracks the $L_1$-ball constraint. The ST update rule is
\begin{align}
\bar{\delta}_{t+1} &= \proj_{\Q}\!\big(\deltaadv_t + \alpha \bm{u}_t - \alpha \vdual_t \big), \label{eq:st_primal} \\
\bar{\vdual}_{t+1} &= \proj_{\Qone}\!\big(\vdual_t + \beta (2\bar{\delta}_{t+1} - \deltaadv_t)\big), \label{eq:st_dual} \\
\deltaadv_{t+1} &= (1-\rho)\deltaadv_t + \rho \bar{\delta}_{t+1}, \label{eq:st_relax_primal} \\
\vdual_{t+1} &= (1-\rho)\vdual_t + \rho \bar{\vdual}_{t+1}. \label{eq:st_relax_dual}
\end{align}
The primal step in \eqref{eq:st_primal} applies the base direction together with a dual correction $-\alpha\vdual_t$ and then projects back to $\Q$. The dual step projects an extrapolated state onto $\Qone$ through the sort-based algorithm~\cite{duchi2008l1projection}. The relaxation parameter $\rho$ interpolates between the previous and updated iterates.

The wrapper leaves the direction-generation pipeline intact. When $\bm{u}_t$ is the MI-FGSM momentum, the resulting method is STMI. When $\bm{u}_t$ is obtained from input transformations, ensembles, or other gradient-processing rules, the same wrapper yields the corresponding ST variant without altering the rest of the attack. The MDCS-based variants STMDCSMI and STMDCSMEF are the one structural exception: their underlying base attack carries an additional per-coordinate adaptive-step state $\bm{d}_t$ that is element-wise multiplied into $\bm{u}_t$ before the primal projection, so they require a small extension of Algorithm~\ref{alg:st}. The full update is given as Algorithm~\ref{alg:st_mdcs} in Appendix~\ref{subsec:appendix_stmdcs}, and reduces to Algorithm~\ref{alg:st} in the limit $\gamma\to+\infty$, in which $\bm{d}_t$ remains the all-ones tensor for all $t$.

\begin{algorithm}[t]
\caption{ST update}
\label{alg:st}
\begin{algorithmic}[1]
\Require Clean image $\xclean$, label $\ytrue$, feasible set $\Q$, regularization strength $\lambda$ ($\Qone{=}\{\bm{z}:\|\bm{z}\|_1{\le}\lambda\}$), step sizes $\alpha,\beta$, relaxation $\rho$, iteration number $T$
\State Initialize $\deltaadv_0=0$, $\vdual_0=0$
\For{$t=0,\dots,T-1$}
\State $\bm{g}_t=\nabla_{\deltaadv}J(\deltaadv_t)$
\State Compute $\bm{u}_t$ from $\bm{g}_t$ 

\Comment{e.g., momentum or transformed-gradient average}
\State $\bar{\delta}_{t+1}=\proj_{\Q}\!\left(\deltaadv_t+\alpha\bm{u}_t-\alpha\vdual_t\right)$
\State $\bar{\vdual}_{t+1}=\proj_{\Qone}\!\left(\vdual_t+\beta(2\bar{\delta}_{t+1}-\deltaadv_t)\right)$
\State $\deltaadv_{t+1}=(1-\rho)\deltaadv_t+\rho\bar{\delta}_{t+1}$
\State $\vdual_{t+1}=(1-\rho)\vdual_t+\rho\bar{\vdual}_{t+1}$
\EndFor
\State \Return $\xclean+\deltaadv_T$
\end{algorithmic}
\end{algorithm}
\section{Experiments}
\label{sec:experiments}

The empirical study is organized around four questions, each of which targets a hypothesis that should hold if the wrapper is doing more than dialing down the perturbation budget. \cref{subsec:pareto} asks whether ST shifts the ASR--imperceptibility Pareto frontier outward of the base attack, rather than reproducing operating points already on its $\epsilon$ sweep. \cref{subsec:competitor} asks whether the same shift survives when the surrogate is changed from a CNN to a ViT, and whether the ST-wrapped curves Pareto-dominate dedicated stealth-oriented transfer attacks at matched ASR. \cref{subsec:ablation} isolates the contribution of the three primal-dual hyperparameters $\lambda$, $\beta$, $\rho$ and contrasts the $L_\infty$ regularizer against $L_1$ and $L_2$ alternatives within the same Condat--V\~{u} template. \cref{subsec:visualization} reports qualitative controls.

\subsection{Experimental Setup}
\label{subsec:setup}
Following the standard TransferAttack evaluation setup~\cite{zhao2025revisiting}, we use $1000$ randomly sampled images from the NIPS2017 ImageNet-compatible benchmark. Unless otherwise specified, we use a perturbation budget of $\epsilon{=}16/255$, $T{=}10$ attack iterations, and a primal step size of $\alpha{=}1.6/255$.\footnote{For the PGD baseline~\cite{madry2018pgd}, we use $\alpha=2.0/255$. In the $\epsilon$ sweep, we follow the standard PGD setting and set its step size to $1.25\times$ the default step size used by the other methods.}

\noindent\textbf{ASR and Target models}
For black-box transfer evaluation, we consider seven target models spanning both CNN and ViT architectures: VGG-16, MobileNet-V2, Inception-V3, ViT-Base, PiT-B, Visformer-S, and Swin-T. Following the default TransferAttack convention, the main-text ASR is reported as the average over these seven targets together with the surrogate model, and is therefore a mixed-target average rather than a purely transfer-only metric. Unless otherwise stated, ResNet-50 serves as the surrogate model in this section.

\noindent\textbf{Stealth metrics}
As there is no universally perfect metric for evaluating imperceptibility, we report three classes of quality metrics to provide a more comprehensive evaluation from complementary perspectives, including perceptual fidelity, naturalness, and distributional similarity. Full-reference metrics compare the adversarial image with its clean counterpart: PSNR, MS-SSIM~\cite{wang2003msssim}, LPIPS~\cite{zhang2018lpips}, DISTS~\cite{ding2020dists}, TOPIQ-FR~\cite{chen2024topiq}, and LF~\cite{luo2022ssah}. No-reference metrics assess the adversarial image alone, including NIQE~\cite{mittal2013niqe} and CLIP-IQA~\cite{wang2023clipiqa}. We further report FID~\cite{heusel2017fid} as a distributional metric. Appendix~\ref{subsec:appendix_stealth_eval} gives a brief discussion of why this evaluation is inherently multi-metric.

\subsection{Pareto Frontier Analysis}
\label{subsec:pareto}
Reducing the perturbation budget $\epsilon$ is the most obvious way to improve imperceptibility, so it is the natural baseline for asking \textbf{whether ST does anything beyond producing a weaker attack}. If smaller $\epsilon$ were enough to explain the effect, the ST curve would simply reproduce operating points already available on the base attack's budget sweep.

We test this by sweeping $\epsilon$ from $2/255$ to $32/255$ for each ST--base pair, so that both the wrapper and the budget-reduction baseline trace out a full ASR--imperceptibility curve rather than a single operating point. Figure~\ref{fig:pareto_stmi} compares STMI with MI-FGSM~\cite{dong2018mifgsm}. Across the eight-metric view, the STMI curve lies on the favorable side of the MI-FGSM curve over the entire reported sweep: at matched ASR it improves the shown stealth metrics, and at matched imperceptibility levels it retains higher ASR. The wrapper therefore moves the operating curve outward, rather than tracing operating points already on the base attack's $\epsilon$-sweep curve.
\begin{figure}[t]
\centering
    \includegraphics[width=\columnwidth]{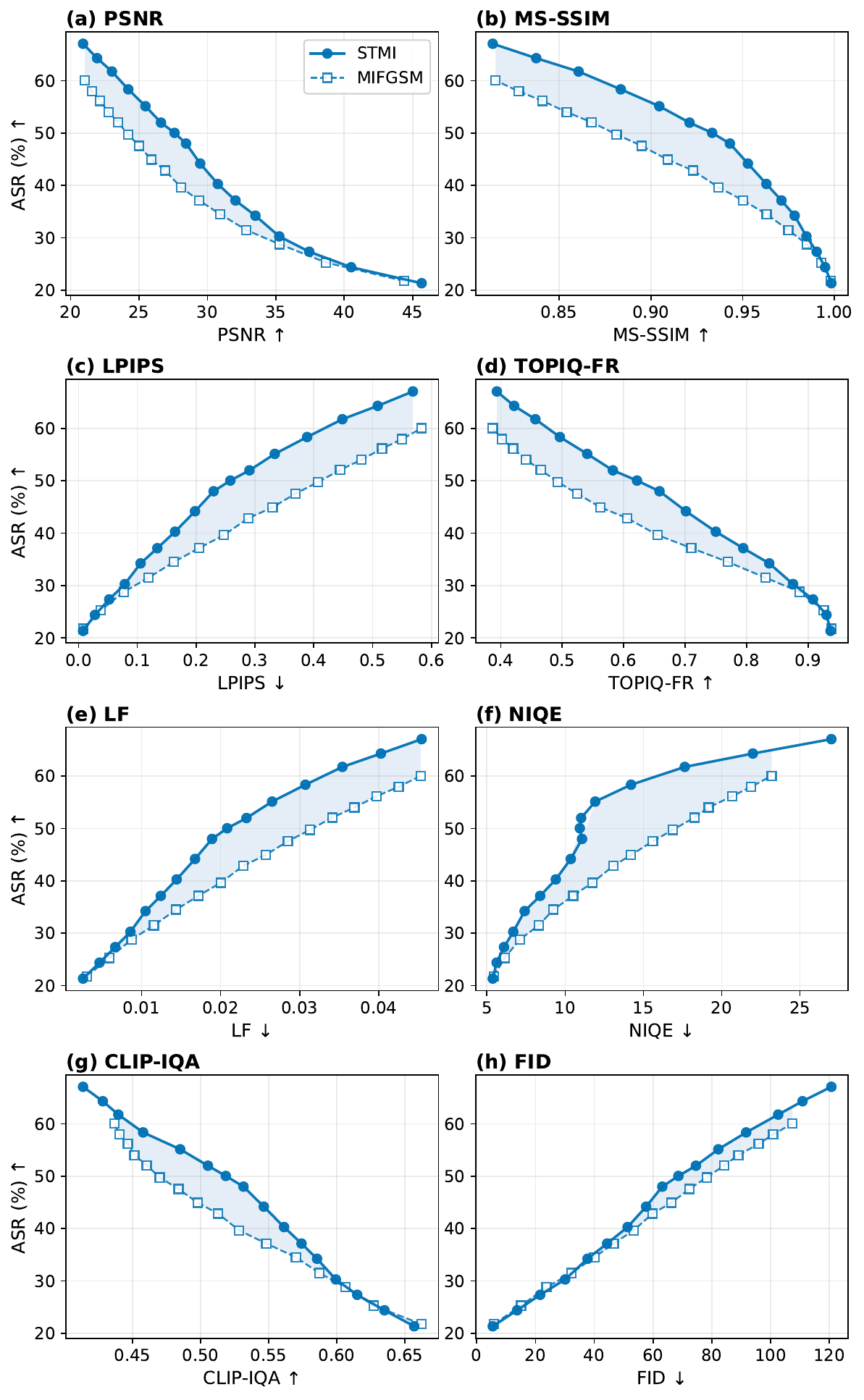}%
\caption{ASR--imperceptibility curves for STMI and MI-FGSM. Each point corresponds to one $\epsilon$ value. Solid line with filled circles denotes STMI; dashed line with hollow squares denotes MI-FGSM. The light-blue shading marks the envelope between the two curves (the operating-gap region). Across the reported sweep, the STMI curve lies on the favorable side of MI-FGSM for the shown metrics, which suggests that ST changes the operating curve rather than merely moving along the base attack's budget sweep.}
\label{fig:pareto_stmi}
\end{figure}

We repeat the same experiment for four additional base attacks: the non-momentum baseline PGD~\cite{madry2018pgd}, the input-transformation attack DIM~\cite{xie2019dim}, the high-ASR attack OPS~\cite{guo2025ops}, and the MDCS-based attack MDCSMEF~\cite{qiu2026mef,tao2026mdcs}. As shown in Figure~\ref{fig:pareto_six}, the ST variant occupies better operating points than the corresponding base attack across most of the sweep for each method. This indicates that ST is a modular wrapper, as it can be attached to qualitatively different attack types and still shift their ASR--imperceptibility frontier outward.
\begin{figure*}[t]
\centering
    \includegraphics[width=\textwidth]{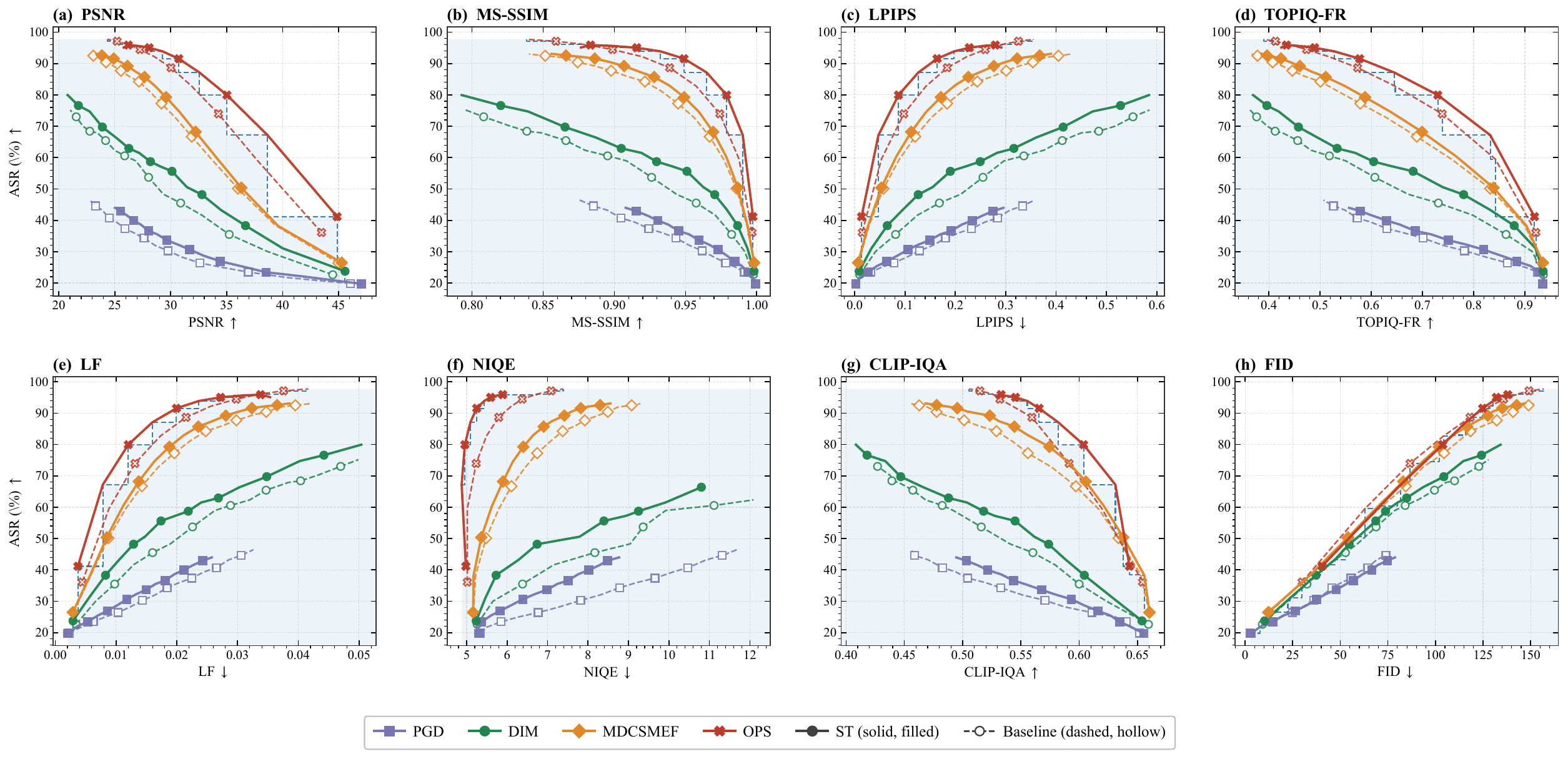}%
\caption{ASR--imperceptibility curves across eight stealth metrics. Solid lines with filled markers denote ST variants, and dashed lines with hollow markers denote the corresponding base attacks. Light-blue shading indicates the Pareto-dominated region. Within each method, the ST-wrapped curve shifts to a strictly better operating region  over most of the sweep. Across methods, stronger transfer attacks such as STOPS tend to Pareto-dominate weaker ones at matched ASR.}
\label{fig:pareto_six}
\end{figure*}

Figure~\ref{fig:pareto_six} also clarifies a point that is obscured by a fixed-budget comparison. \textbf{When attacks differ in transferability, they are more appropriately compared on the Pareto frontier than at the same $\epsilon$.} At a shared budget, one attack may achieve higher ASR and worse imperceptibility simply because it is stronger, while another may look cleaner only because its ASR is much lower. PGD indeed appears more stealthy than stronger transfer attacks at $\epsilon{=}16/255$, but that operating point corresponds to only about $32\%$ ASR. The meaningful comparison is therefore at matched ASR, where we ask which attack is more stealthy, or equivalently at matched imperceptibility, where we ask which attack reaches higher ASR. Under this Pareto-frontier view, the fixed-$\epsilon$ argument that PGD is more stealthy than stronger transfer attacks~\cite{zhao2025revisiting} no longer holds: Figure~\ref{fig:pareto_six} shows that stronger transfer attacks reach PGD's $32\%$-ASR operating point at substantially smaller budgets and already dominate PGD on the reported stealth metrics there. PGD's apparent imperceptibility advantage is therefore an artifact of the fixed-budget protocol; at matched ASR, stronger transfer attacks are in fact more stealthy than PGD.

\textbf{Viewed from this perspective, transferability and imperceptibility need not be framed as inherently competing goals: a stronger transferable attack can already reach a target ASR with a smaller perturbation budget, and therefore operate in a better perceptual regime.} ST exploits this property by leaving the base attack's direction-generation machinery (momentum accumulation, input transformation, MDCS-type step scheduling) intact and intervening only at the update/projection stage, so its role is to improve the imperceptibility profile achievable by that attack at a given ASR, or equivalently to retain more ASR at a given imperceptibility level.

\begin{table*}[t]
	\caption{ST versus base attack at $\epsilon{=}16/255$ on seven representative attacks. Best per row in bold; the percentage in subscript reports the relative change of the ST variant over its base attack.}
	\label{tab:cross_attack_eps16}
	\centering
	\resizebox{\textwidth}{!}{%
	\begin{tabular}{@{}l l l l l l l l l l l@{}}
		\toprule
		\multicolumn{2}{c}{} &
		\multicolumn{6}{c}{Full-reference} &
		\multicolumn{2}{c}{No-reference} &
		\multicolumn{1}{c}{Distributional} \\
		\cmidrule(lr){3-8}\cmidrule(lr){9-10}\cmidrule(lr){11-11}
		Method &
		\small ASR $\uparrow$ (\%) &
		\small PSNR $\uparrow$ &
		\small MS-SSIM $\uparrow$ &
		\small LPIPS $\downarrow$ &
		\small DISTS $\downarrow$ &
		\small TOPIQ-FR $\uparrow$ &
		\small LF $\downarrow$ &
		\small NIQE $\downarrow$ &
		\small CLIP-IQA $\uparrow$ &
		\small FID $\downarrow$ \\
		\midrule
		PGD &
		31.98 & 28.61 & 0.953 & 0.155 & 0.172 & 0.741 & 0.0164 & 8.29 & 0.552 & \textbf{40.89} \\
		STPGD &
		\textbf{32.31}\Dp{+1.0\%} &
		\textbf{30.61}\Dp{+7.0\%} &
		\textbf{0.966}\Dp{+1.4\%} &
		\textbf{0.128}\Dp{-17.4\%} &
		\textbf{0.158}\Dp{-8.1\%} &
		\textbf{0.784}\Dp{+5.8\%} &
		\textbf{0.0134}\Dp{-18.3\%} &
		\textbf{6.69}\Dp{-19.3\%} &
		\textbf{0.578}\Dp{+4.7\%} &
		42.52\Dp{+4.0\%} \\
		\addlinespace[4pt]
		MI-FGSM &
		42.86 & 26.92 & 0.923 & 0.289 & 0.215 & 0.606 & 0.0228 & 13.08 & 0.513 & 59.85 \\
		STMI &
		\textbf{44.19}\Dp{+3.1\%} &
		\textbf{29.48}\Dp{+9.5\%} &
		\textbf{0.953}\Dp{+3.2\%} &
		\textbf{0.198}\Dp{-31.4\%} &
		\textbf{0.189}\Dp{-12.1\%} &
		\textbf{0.701}\Dp{+15.7\%} &
		\textbf{0.0168}\Dp{-26.6\%} &
		\textbf{10.36}\Dp{-20.8\%} &
		\textbf{0.546}\Dp{+6.6\%} &
		\textbf{57.65}\Dp{-3.7\%} \\
		\addlinespace[4pt]
		MDCSMI &
		46.66 & 25.57 & 0.903 & 0.343 & 0.240 & 0.547 & 0.0266 & 23.74 & 0.488 & 73.25 \\
		STMDCSMI &
		\textbf{46.85}\Dp{+0.4\%} &
		\textbf{26.29}\Dp{+2.8\%} &
		\textbf{0.913}\Dp{+1.2\%} &
		\textbf{0.317}\Dp{-7.5\%} &
		\textbf{0.229}\Dp{-4.5\%} &
		\textbf{0.574}\Dp{+4.9\%} &
		\textbf{0.0244}\Dp{-8.3\%} &
		\textbf{19.88}\Dp{-16.3\%} &
		\textbf{0.500}\Dp{+2.4\%} &
		\textbf{72.23}\Dp{-1.4\%} \\
		\addlinespace[4pt]
		DIM &
		57.05 & 26.89 & 0.912 & 0.296 & 0.224 & 0.560 & 0.0263 & 10.34 & 0.499 & 76.38 \\
		STDIM &
		\textbf{57.20}\Dp{+0.3\%} &
		\textbf{29.08}\Dp{+8.2\%} &
		\textbf{0.941}\Dp{+3.2\%} &
		\textbf{0.220}\Dp{-25.5\%} &
		\textbf{0.201}\Dp{-10.4\%} &
		\textbf{0.642}\Dp{+14.7\%} &
		\textbf{0.0197}\Dp{-25.2\%} &
		\textbf{8.97}\Dp{-13.2\%} &
		\textbf{0.526}\Dp{+5.4\%} &
		\textbf{71.25}\Dp{-6.7\%} \\
		\addlinespace[4pt]
		GRA &
		74.69 & 26.62 & 0.902 & 0.262 & 0.240 & 0.522 & 0.0283 & 7.88 & 0.524 & 97.45 \\
		STGRA &
		\textbf{75.15}\Dp{+0.6\%} &
		\textbf{27.83}\Dp{+4.6\%} &
		\textbf{0.929}\Dp{+3.0\%} &
		\textbf{0.213}\Dp{-18.6\%} &
		\textbf{0.226}\Dp{-5.6\%} &
		\textbf{0.559}\Dp{+7.1\%} &
		\textbf{0.0242}\Dp{-14.5\%} &
		\textbf{7.43}\Dp{-5.6\%} &
		\textbf{0.543}\Dp{+3.6\%} &
		\textbf{95.76}\Dp{-1.7\%} \\
		\addlinespace[4pt]
		MDCSMEF &
		81.16 & 28.10 & 0.933 & 0.214 & 0.236 & 0.535 & 0.0222 & 7.08 & 0.542 & 112.28 \\
		STMDCSMEF &
		\textbf{83.00}\Dp{+2.3\%} &
		\textbf{28.56}\Dp{+1.6\%} &
		\textbf{0.939}\Dp{+0.6\%} &
		\textbf{0.199}\Dp{-7.0\%} &
		\textbf{0.235}\Dp{-0.6\%} &
		\textbf{0.546}\Dp{+2.2\%} &
		\textbf{0.0212}\Dp{-4.6\%} &
		\textbf{6.65}\Dp{-6.2\%} &
		\textbf{0.556}\Dp{+2.5\%} &
		\textbf{109.12}\Dp{-2.8\%} \\
		\addlinespace[4pt]
		OPS &
		\textbf{96.10} & 26.18 & 0.879 & 0.293 & 0.276 & 0.440 & 0.0337 & 6.75 & 0.526 & 143.21 \\
		STOPS &
		95.66\Dp{-0.5\%} &
		\textbf{27.00}\Dp{+3.1\%} &
		\textbf{0.897}\Dp{+2.1\%} &
		\textbf{0.257}\Dp{-12.2\%} &
		\textbf{0.266}\Dp{-3.5\%} &
		\textbf{0.456}\Dp{+3.7\%} &
		\textbf{0.0309}\Dp{-8.2\%} &
		\textbf{5.76}\Dp{-14.5\%} &
		\textbf{0.535}\Dp{+1.8\%} &
		\textbf{136.01}\Dp{-5.0\%} \\
		\midrule
		\textit{Mean $\Delta$} &
		$+1.0\%$ &
		$+5.3\%$ &
		$+2.1\%$ &
		$-17.1\%$ &
		$-6.4\%$ &
		$+7.7\%$ &
		$-15.1\%$ &
		$-13.7\%$ &
		$+3.9\%$ &
		$-2.5\%$ \\
		\bottomrule
\end{tabular}}

\end{table*}

To pin down the per-method gain at a fixed snapshot of the frontier, Table~\ref{tab:cross_attack_eps16} reports all nine stealth metrics together with ASR at $\epsilon{=}16/255$ on seven base attacks, extending the five attacks from Figures~\ref{fig:pareto_stmi} and~\ref{fig:pareto_six} with MDCSMI~\cite{tao2026mdcs} and GRA~\cite{zhu2023gra}. Three patterns emerge. First, ST consistently improves all stealth metrics across every base attack, with the sole exception of FID on STPGD. The largest gains concentrate on LPIPS ($17.1\%$), LF ($15.1\%$), and NIQE ($13.7\%$), whereas MS-SSIM changes the least ($2.1\%$). This confirms that the ST wrapper applies broadly across different types of transfer attacks. Second, FID is the least consistent metric: six of the seven pairs improve it, whereas STPGD is the only pair that incurs a small FID regression ($+1.6$) while achieving sizable gains on the other reported stealth metrics. This is expected, as FID measures distributional distance in Inception feature space and is less sensitive to the local pixel-level artifacts that ST directly targets than per-image metrics. Third, six of the seven variants also improve ASR. The one exception, STOPS, gives up only $0.44$\,pp of ASR from an already saturated $96\%$ baseline while delivering sizable imperceptibility gains, so we read this point as an acceptable trade-off near the saturation regime rather than a failure of the wrapper. Together with \cref{fig:pareto_six}, these results confirm that ST improves imperceptibility without sacrificing transferability.

\subsection{Comparison with Stealthy Transfer Methods}
\label{subsec:competitor}

We now ask two questions in one experiment: \emph{(i)}~does the ST-vs.-base-attack pattern established on a ResNet-50 surrogate still hold under a different surrogate architecture, and \emph{(ii)}~how do the ST-wrapped attacks compare with recent stealth-oriented transfer attacks at matched ASR? Answering both at once requires a surrogate compatible with all evaluated attacks: TESSER~\cite{guesmi2025tesser} is only defined on ViT-based surrogates, so we run the comparison on a ViT-B/16 surrogate and sweep $\epsilon\in[2,32]/255$ for nine attack variants while otherwise following the same hyperparameters as before. The competitor set is PS Attack~\cite{du2023ps}, CAME~\cite{hu2024came}, and TESSER~\cite{guesmi2025tesser}; the ST set is STPGD, STMDCSMEF, and STOPS together with their paired baselines. Results are reported in Figure~\ref{fig:q4_pareto_vit}.

\begin{figure*}[t]
  \centering
  \includegraphics[width=0.98\textwidth]{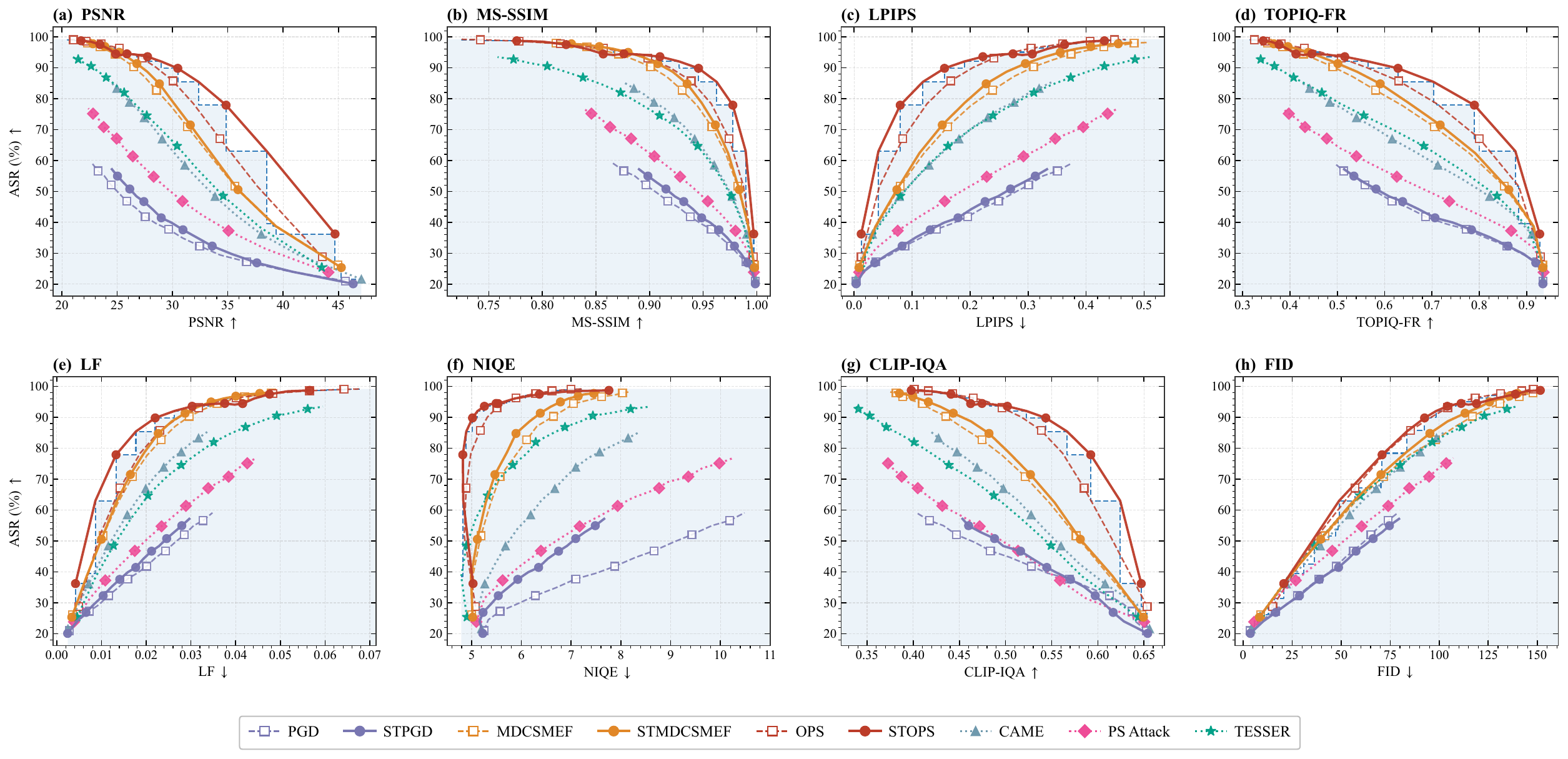}
  \caption{ASR--imperceptibility curves across eight stealth metrics on a ViT-B/16 surrogate. Dotted lines denote dedicated stealth-oriented competitors (CAME, PS Attack, TESSER). STOPS occupies the outer Pareto frontier on most shown metrics. STMDCSMEF plays a similar role one rung lower on the ASR axis.}
  \label{fig:q4_pareto_vit}
\end{figure*}

\noindent\textbf{Generality across additional surrogate architectures} The strongest ST-wrapped attacks again occupy the best operating region: at comparable ASR, STOPS and STMDCSMEF generally improve the shown stealth metrics over the dedicated stealth-oriented competitors, while at comparable imperceptibility levels they typically retain higher ASR. This metric-wise pattern is also consistent with Figure~\ref{fig:pareto_stmi} rather than in tension with it. At the shared snapshot $\epsilon{=}16/255$, averaging over the three shown ST--base pairs, the clearest gains again appear on NIQE ($8.8\%$), LPIPS ($7.1\%$), and LF ($6.4\%$), whereas MS-SSIM changes the least ($1.0\%$) and FID remains the least consistent metric because STPGD incurs a small regression. Relative to the corresponding ResNet-50 results, however, these imperceptibility gains are visibly smaller in magnitude. This suggests that the operating points transferred from ResNet-50 are not yet fully tuned for the ViT-B/16 surrogate. Overall, after switching to ViT-B/16, ST still improves the Pareto frontier of each base attack, but the outward shift is weaker than on ResNet-50, which points to surrogate-specific tuning headroom.

\noindent\textbf{Comparison with stealth-oriented transfer competitors} Among the dedicated competitors, CAME and TESSER are stronger than PS Attack and reach a higher ASR band than STPGD. At a matched ASR level around $63\%$, CAME (LPIPS $0.156$, NIQE $6.40$) and TESSER (LPIPS $0.162$, NIQE $5.32$) are clearly more stealthy than PS Attack (LPIPS $0.320$, NIQE $8.23$); both, however, are still strictly behind STMDCSMEF (LPIPS $0.113$, NIQE $5.28$) and STOPS (LPIPS $0.042$, NIQE $4.83$) at that same ASR level. The same ordering holds across the rest of the eight stealth metrics shown in Figure~\ref{fig:q4_pareto_vit}. The take-away is that the main advantage of ST is not to push imperceptibility at a weak operating point, but to preserve sizable imperceptibility gains while inheriting the transferability of a strong base attack: the ST variants reach a high-ASR regime that the dedicated stealth-oriented competitors do not enter, and at the highest ASR levels where they overlap, the ST variants Pareto-dominate them.

\subsection{Ablation Study}
\label{subsec:ablation}
We next examine which parts of ST matter in practice. The ablation has two components: sensitivity to the three primal-dual hyperparameters $\lambda$, $\beta$, and $\rho$, and a comparison among several choices of the \emph{soft regularizer} $\|\deltaadv\|_{\infty}$ within the same Condat--V\~{u} template. Unless otherwise stated, all experiments in this subsection use STMI with $\bar\lambda{=}2.4$, $\beta{=}160$, $\rho{=}1.0$, $\epsilon{=}16/255$, and a ResNet-50 surrogate.

\textbf{$\bar\lambda$ is the main trade-off parameter}
$\bar\lambda$ controls the radius of the dual $L_1$ ball and has the clearest effect on the final operating point. Sweeping $\bar\lambda \in [0.1, 5.0]$ on STMI traces a wide arc on every stealth metric. As $\bar\lambda$ increases from small to moderate values, the stealth metrics improve substantially while ASR remains relatively stable. The most favorable trade-off appears in the range $\bar\lambda\in[0.5,3.0]$, where imperceptibility is already improved and ASR even peaks slightly. For larger values, the regularizer becomes too strong and both ASR and perturbation magnitude collapse.

To check whether $\bar\lambda$ merely acts as a disguised budget reduction, we also sweep $\epsilon \in [2/255, 16/255]$ on plain MI-FGSM. Across the shown metrics, the STMI $\bar\lambda$ sweep remains more favorable than the MI-FGSM $\epsilon$ sweep at comparable imperceptibility levels. We therefore interpret $\bar\lambda$ as a genuine geometric control on the update, not simply another way of dialing down the perturbation budget. For the cross-attack results in Table~\ref{tab:cross_attack_eps16}, $\bar\lambda$ is selected per base attack from the same sweep range; the per-attack values are listed in Appendix~\ref{sec:appendix_protocol}.

\begin{figure}[h]
\centering
    \includegraphics[width=\columnwidth]{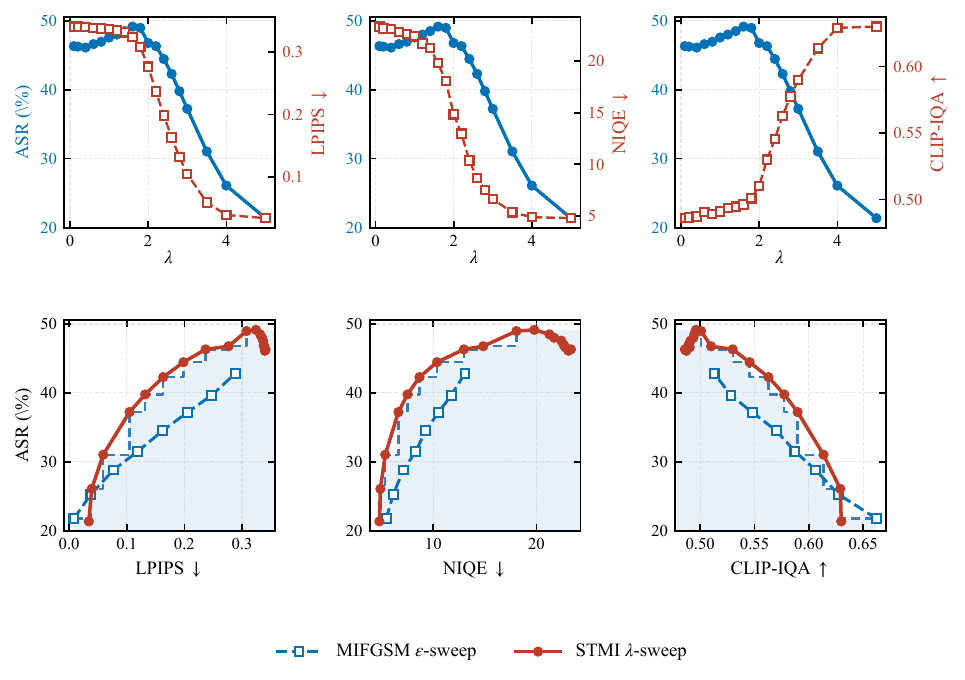}%
\caption{$\bar\lambda$ sensitivity of STMI. Top row: $\bar\lambda$ sweep at fixed $\epsilon{=}16/255$; each panel shows average ASR (left axis, blue solid) and one stealth metric (right axis, red dashed) as $\bar\lambda$ varies. Bottom row: the resulting STMI $\bar\lambda$ sweep in the ASR--imperceptibility plane, compared with the same MI-FGSM $\epsilon$ sweep used as reference. The $\bar\lambda$ sweep traces the widest trade-off arc and reaches more favorable trade-offs than merely shrinking $\epsilon$.}
\label{fig:lambda_vs_eps}
\end{figure}

\textbf{$\beta$ offers a broad usable range}
$\beta$ controls how strongly the dual feedback influences the primal update. We sweep $\beta \in \{20,40,\ldots,300\}$. As presented in Figure~\ref{fig:hparam_beta}, unlike $\bar\lambda$, changing $\beta$ mostly refines a narrower local trade-off band around the default operating region rather than tracing a wide new frontier. In practice, a broad interval $\beta\in[60,260]$ remains usable. The default $\beta=160$ comes from the coupling heuristic $\alpha\beta\lesssim 1$, which here gives $\beta\approx 1/\alpha\approx 160$, and the sweep places that choice close to the trade-off knee. More concretely, Appendix~\ref{sec:appendix_theory} analyzes the linearized primal-dual coupling and shows that the product $c=\alpha\beta$ is the key stability quantity, with $c\approx1$ marking the practical coupling boundary in that reduced model (Proposition~\ref{prop:linear_coupling}; Remark~\ref{rem:beta_160}).

\textbf{$\rho=1.0$ is a conservative default}
$\rho \in (0,2)$ is the over-relaxation parameter of the Condat--V\~{u} iteration. We sweep $\rho \in \{0.2,0.4,\ldots,2.0\}$. As presented in Figure~\ref{fig:hparam_rho}, changing $\rho$ mostly refines a local trade-off segment rather than creating a wider frontier. The range $\rho\in[0.8,1.4]$ is stable in practice, with a mild ASR peak near $\rho=1.2$ at some cost in imperceptibility. As $\rho$ approaches $2.0$, the iteration becomes too aggressive and both ASR and imperceptibility deteriorate. We therefore keep $\rho=1.0$ as a more conservative default.

Taken together, these sweeps indicate that $\bar\lambda$ is the main trade-off knob across ST-wrapped attacks, whereas $\beta$ and $\rho$ behave more like template-level hyperparameters with relatively broad usable ranges. Therefore, we use $(\beta,\rho)=(160,1.0)$ as the default template-level setting across ST variants, while selecting $\bar\lambda$ separately for each wrapped attack.

\begin{figure}[h]
\centering
    \includegraphics[width=\columnwidth]{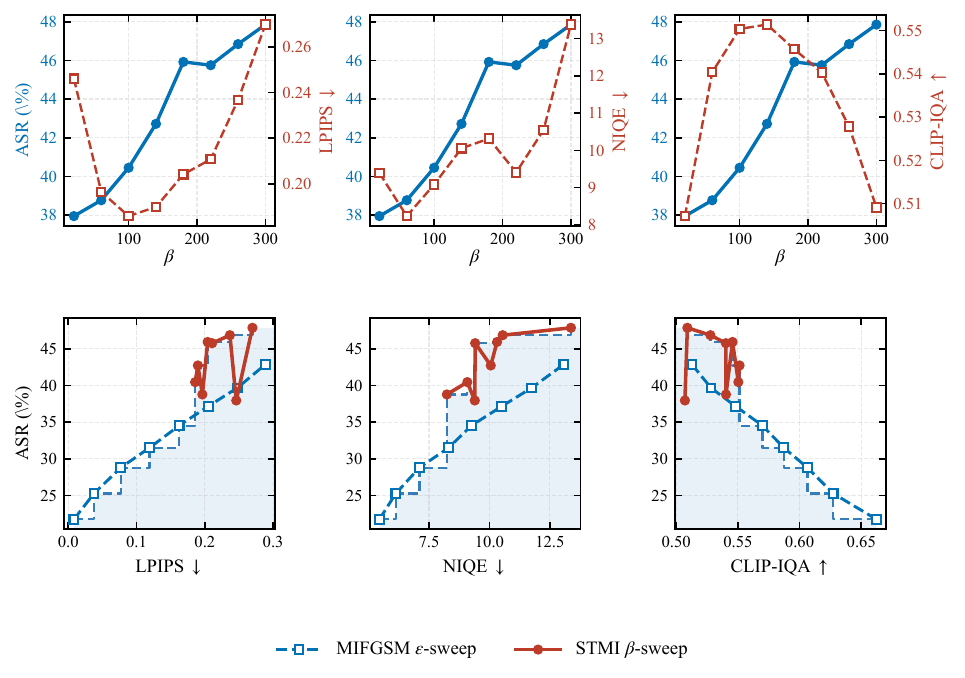}%
\caption{$\beta$ sensitivity of STMI. The default $\beta=160$ comes from the coupling heuristic $\alpha\beta\lesssim 1$, derived in Appendix~\ref{sec:appendix_theory} (Remark~\ref{rem:beta_160}).}
\label{fig:hparam_beta}
\end{figure}

\begin{figure}[h]
\centering
    \includegraphics[width=\columnwidth]{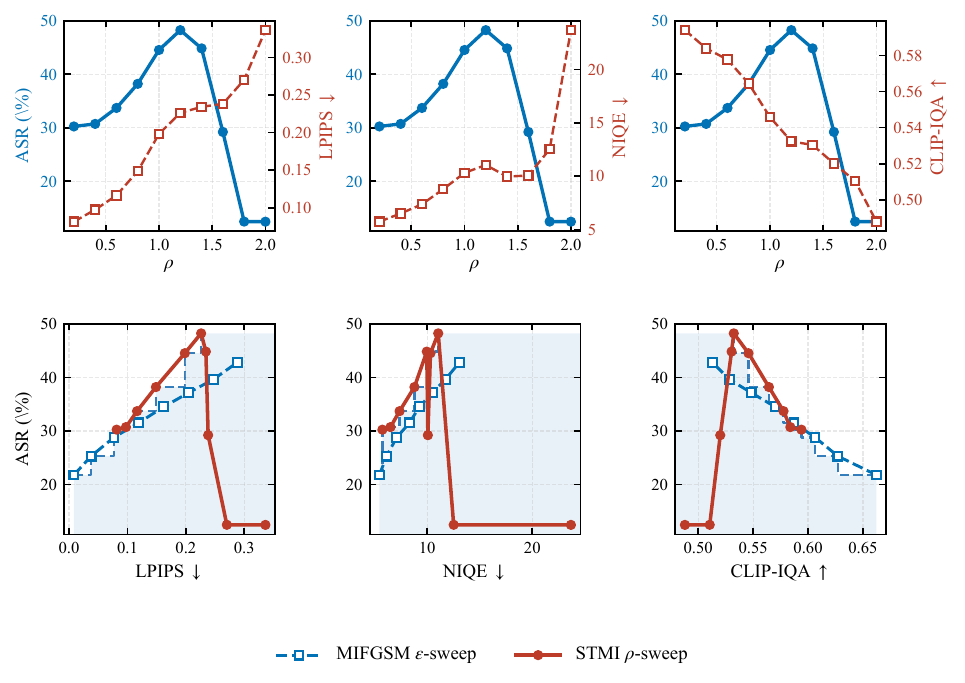}%
\caption{$\rho$ sensitivity of STMI. The sweep mainly tests how far over-relaxation can be pushed before leaving the useful local trade-off segment.}
\label{fig:hparam_rho}
\end{figure}

\textbf{Choice of soft regularizer}
We finally compare three choices of the soft regularizer within the same Condat--V\~{u} template: $\|\deltaadv\|_{\infty}$ (ours), $\|\deltaadv\|_{2}$, and $\|\deltaadv\|_{1}$. By Fenchel duality, these correspond to $L_1$, $L_2$, and $L_\infty$ dual-ball projections, respectively. We compare these three objective-level choices while keeping the base attack, $\epsilon$, step schedule, iteration budget, and surrogate fixed.

Table~\ref{tab:lp_ablation} shows that the $\|\deltaadv\|_{\infty}$ soft regularizer used by ST gives the best overall balance among the three tested penalties. It attains the highest average ASR and the best value on nearly all metrics, with NIQE as the main exception. The $\|\deltaadv\|_{1}$ penalty gives the best NIQE, but that advantage comes with weaker results on most other metrics. We therefore keep the $\|\deltaadv\|_{\infty}$ regularizer as the default, since it most consistently improves the overall ASR--imperceptibility trade-off across complementary metrics.
\begin{table}[t]
\caption{Soft-regularizer ablation on STMI. The three columns correspond to three objective choices for $h(\deltaadv)$, implemented by their dual projections inside the same Condat--V\~{u} template; best per row in bold.}
\label{tab:lp_ablation}
\centering
\small
\setlength{\tabcolsep}{4pt}
\begin{tabular}{@{}l r r r@{}}
\toprule
Metric & $\lambda\|\deltaadv\|_{\infty}$ (ours) & $\lambda\|\deltaadv\|_{2}$ & $\lambda\|\deltaadv\|_{1}$ \\
\midrule
ASR $\uparrow$ (\%) & \textbf{44.19} & 44.58 & 42.44 \\
PSNR $\uparrow$ & \textbf{29.48} & 28.35 & 28.70 \\
MS-SSIM $\uparrow$ & \textbf{0.953} & 0.936 & 0.941 \\
LPIPS $\downarrow$ & \textbf{0.198} & 0.258 & 0.241 \\
DISTS $\downarrow$ & \textbf{0.189} & 0.206 & 0.198 \\
TOPIQ-FR $\uparrow$ & \textbf{0.701} & 0.615 & 0.625 \\
LF $\downarrow$ & \textbf{0.0168} & 0.0189 & 0.0171 \\
CLIP-IQA $\uparrow$ & \textbf{0.546} & 0.520 & 0.529 \\
NIQE $\downarrow$ & 10.36 & 9.52 & \textbf{7.68} \\
FID $\downarrow$ & \textbf{57.65} & 68.62 & 64.50 \\
\bottomrule
\end{tabular}

\end{table}

\begin{figure*}
	\centering
		\includegraphics[width=0.95\textwidth]{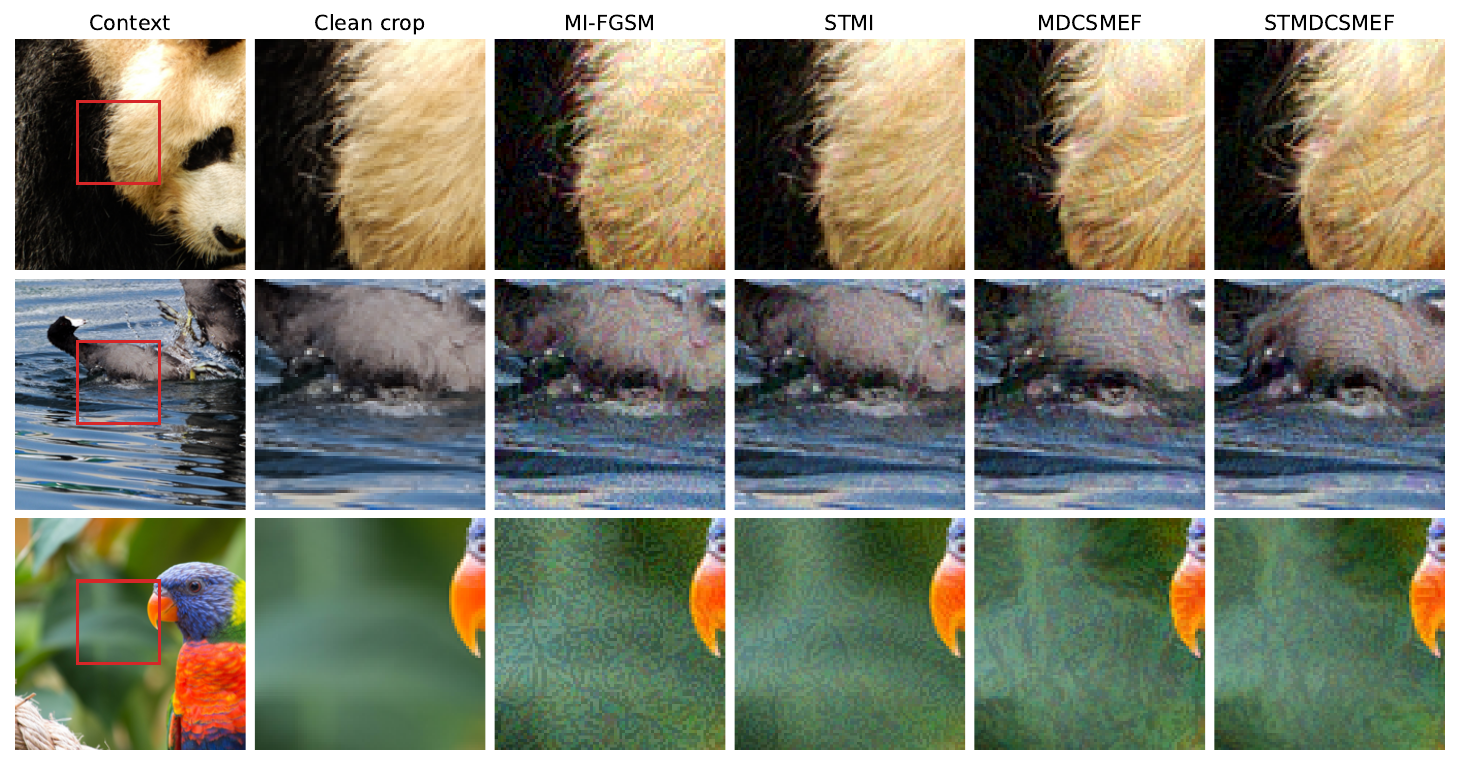}%
	\caption{Qualitative comparison at fixed $\epsilon{=}16/255$ (the first three rows of Figure~\ref{fig:appendix_same_eps_page_visual}). Each row shows the clean-image context with the selected crop region marked in red, the clean crop itself, and the corresponding crop regions for MI-FGSM, STMI, MDCSMEF, and STMDCSMEF. At the same perturbation budget, the ST variants suppress local color speckle and edge ringing while preserving cleaner object boundaries.}
	\label{fig:visual_crops}
\end{figure*}

\subsection{Visualization}
\label{subsec:visualization}
We visualize MI-FGSM and MDCSMEF together with their ST-wrapped versions. Figure~\ref{fig:visual_crops} shows three representative samples, together with the original-image context and the local zoomed regions used for inspection. A full-page same-$\epsilon$ view that pairs each adversarial image with its $5\times\delta$ perturbation, on these three samples plus one additional sample, is provided in Appendix~\ref{subsec:appendix_visual} (Figure~\ref{fig:appendix_same_eps_page_visual}); the matched-ASR counterpart on the same four samples, in which each method uses the $\epsilon$ that lands its ASR near $50\%$, is given by Figure~\ref{fig:appendix_matched_asr_page_visual} of the same appendix and complements the same-$\epsilon$ view from the Pareto-frontier angle.

\section{Discussion and Limitations}
\label{sec:discussion}

The empirical evidence in Section~\ref{sec:experiments} has established that ST shifts every tested base attack outward on the ASR--imperceptibility Pareto frontier. We now step back from the numbers and place ST in two contexts: a methodological one that explains \emph{why} the wrapper has this effect, and a candid statement of what its empirical claims do and do not cover.

\subsection{Discussion}
\label{subsec:discussion_main}

\noindent\textbf{(a) Why a soft $L_\infty$ regularizer.} The Pareto-frontier reading in \cref{sec:experiments} dictates the form of the wrapper. Stronger transfer attacks already lie on a more favorable ASR--imperceptibility curve than weaker ones, so what we want is not a new attack design but a single scalar $\lambda$ that smoothly traces the curve outward without disturbing the base attack. This points to a loss-plus-regularization shape: leave the direction-generation machinery intact and add an imperceptibility term whose strength is tuned by one parameter. The specific choice of $\|\deltaadv\|_\infty$ as the regularizer is empirical: the ablation in \cref{tab:lp_ablation} shows that, within the same Condat--V\~{u} template, $\|\deltaadv\|_\infty$ dominates $\|\deltaadv\|_1$ and $\|\deltaadv\|_2$ on most stealth metrics at comparable ASR.

\noindent\textbf{(b) The $L_\infty$ regularizer does not reduce $\|\deltaadv\|_\infty$, yet perceptual metrics still improve.} Across most results in \cref{sec:experiments}, both ST and the corresponding base attacks reach $\|\deltaadv\|_\infty=\epsilon$, so the regularizer does not visibly reduce the peak perturbation magnitude. This is expected as once the perturbation reaches the boundary of $\Q$, the empirical $\|\deltaadv\|_\infty$ is pinned at $\epsilon$ by construction, and box-constrained gradient methods typically saturate at least one coordinate. However, several stealth metrics are still substantially improved. A plausible explanation is that the regularizer affects the distribution of the perturbation rather than its maximum value: it reduces the number of saturated coordinates and pulls part of the mass back into the interior, thereby mitigating the structured artifacts that these metrics capture.

\subsection{Limitations}

\textbf{Metric proxies do not equal human perception.} Imperceptibility in this paper is reported through objective image-quality indices rather than through direct human judgment. We draw on a broad and complementary set of metrics—full-reference, no-reference, and distributional. However, none of these proxies coincides perfectly with human perception. Each indicator emphasizes a different aspect of distortion. Reporting trade-offs across complementary metrics mitigates but does not eliminate this gap; human-subject studies remain the natural next step.

\textbf{Physical-world scope.} Our evaluation is limited to digital adversarial perturbations. Whether the same Pareto shift holds when perturbations must survive printing, photography, and scene-level variation remains an open question. Physically realizable adversarial attacks differ substantially from the digital pixel-bounded threat model studied here and are therefore left to future work.

\section{Conclusion}
\label{sec:conclusion}

Transferability and imperceptibility have long been pursued and reported separately, with the two goals widely seen as conflicting. This apparent conflict is largely a snapshot effect of the single-$\epsilon$ evaluation protocol: read along the ASR--imperceptibility Pareto frontier, a stronger transfer attack reaches a given ASR with a smaller perturbation budget, and is therefore already more stealthy than a weaker one at matched ASR. We accordingly advocate evaluating transfer attacks along the ASR--imperceptibility Pareto frontier rather than at a single $\epsilon$.

Building on this finding, we introduce ST, a plug-in primal-dual wrapper that recasts adversarial-example generation as a loss-plus-regularization problem. ST keeps the attack loss and the saturation regularizer as separate terms and handles them through independent primal and dual updates, so that convergence and imperceptibility control are addressed by their own dedicated operators rather than fused into a single gradient step. The wrapper modifies only the update step of a compatible base attack and requires no auxiliary trained module or handcrafted perceptual prior.

Empirically, ST inherits the transferability of all seven representative base attacks that we tested, shifts each of them outward on the ASR--imperceptibility frontier, and at matched high-ASR levels Pareto-dominates dedicated stealth-oriented transfer attacks that rely on auxiliary perceptual signals or trained modules. This pattern reinforces the design principle behind ST: it is more effective to inherit a strong transferable attack and reshape only its update step than to build a bespoke stealth-oriented attack from scratch on top of a weaker base, since the wrapper then composes with the strongest direction-generation machinery already available rather than competing against it.

This Pareto-frontier perspective, together with ST, calls on future work on transferable adversarial attacks to evaluate transferability and imperceptibility jointly, report operating curves rather than single-$\epsilon$ snapshots, and assess imperceptibility through multiple complementary metrics.

\section*{Acknowledgments}
This work was supported by the NSFC (Grant No. 62576351). 
AI-assisted tools were used during the preparation of this manuscript for language editing, code development. All content was reviewed and verified by the authors.

\bibliographystyle{IEEEtran}
\bibliography{references}

@misc{szegedy2014intriguing,
  author        = {Szegedy, Christian and Zaremba, Wojciech and Sutskever, Ilya and Bruna, Joan and Erhan, Dumitru and Goodfellow, Ian and Fergus, Rob},
  title         = {{Intriguing Properties of Neural Networks}},
  year          = {2014},
  eprint        = {1312.6199},
  archivePrefix = {arXiv},
  primaryClass  = {cs.CV},
  howpublished  = {International Conference on Learning Representations (ICLR)}
}

@inproceedings{madry2018pgd,
  author        = {Madry, Aleksander and Makelov, Aleksandar and Schmidt, Ludwig and Tsipras, Dimitris and Vladu, Adrian},
  title         = {{Towards Deep Learning Models Resistant to Adversarial Attacks}},
  booktitle     = {Proceedings of the International Conference on Learning Representations (ICLR)},
  year          = {2018},
  eprint        = {1706.06083},
  archivePrefix = {arXiv},
  primaryClass  = {stat.ML}
}

@inproceedings{dong2018mifgsm,
  title={Boosting adversarial attacks with momentum},
author={Dong, Yinpeng and Liao, Fangzhou and Pang, Tianyu and Su, Hang and Zhu, Jun and Hu, Xiaolin and Li, Jianguo},
booktitle={Proceedings of the IEEE conference on computer vision and pattern recognition},
pages={9185--9193},
year={2018}
}

@inproceedings{lin2020nifgsm,
  author    = {Lin, Jiadong and Song, Chuanbiao and He, Kun and Wang, Liwei and Hopcroft, John E.},
  title     = {{Nesterov Accelerated Gradient and Scale Invariance for Adversarial Attacks}},
  booktitle = {Proceedings of the International Conference on Learning Representations (ICLR)},
  year      = {2020},
  eprint    = {1908.06281},
  archivePrefix = {arXiv},
  primaryClass  = {cs.LG}
}

@inproceedings{wang2021vmi,
  title={Enhancing the transferability of adversarial attacks through variance tuning},
author={Wang, Xiaosen and He, Kun},
booktitle={Proceedings of the IEEE/CVF conference on computer vision and pattern recognition},
pages={1924--1933},
year={2021}
}

@misc{wang2021emi,
  author        = {Wang, Xiaosen and Lin, Jiadong and Hu, Han and Wang, Jingdong and He, Kun},
  title         = {{Boosting Adversarial Transferability through Enhanced Momentum}},
  year          = {2021},
  eprint        = {2103.10609},
  archivePrefix = {arXiv},
  primaryClass  = {cs.CV},
  howpublished  = {British Machine Vision Conference (BMVC)}
}

@inproceedings{zhu2023gra,
  title={Boosting adversarial transferability via gradient relevance attack},
author={Zhu, Hegui and Ren, Yuchen and Sui, Xiaoyan and Yang, Lianping and Jiang, Wuming},
booktitle={Proceedings of the IEEE/CVF international conference on computer vision},
pages={4741--4750},
year={2023}
}

@inproceedings{qin2022rap,
  author    = {Qin, Zeyu and Fan, Yanbo and Liu, Yi and Shen, Li and Zhang, Yong and Wang, Jue and Wu, Baoyuan},
  title     = {{Boosting the Transferability of Adversarial Attacks with Reverse Adversarial Perturbation}},
  booktitle = {Advances in Neural Information Processing Systems (NeurIPS)},
  year      = {2022}
}

@article{qiu2026mef,
  title={Boosting Adversarial Transferability with Low-Cost Optimization via Maximin Expected Flatness},
author={Qiu, Chunlin and Li, Ang and Duan, Yiheng and Zhang, Shenyi and Zhang, Yuanjie and Zhao, Lingchen and Wang, Qian},
journal={IEEE Transactions on Information Forensics and Security},
year={2026},
publisher={IEEE}
}

@inproceedings{tao2026mdcs,
  author        = {Tao, Wei and Dai, Yang and Huang, Jincai and Tao, Qing},
  title         = {{The Power of Decaying Steps: Enhancing Attack Stability and Transferability for Sign-based Optimizers}},
  booktitle = {Proceedings of the IEEE/CVF Conference on Computer Vision and Pattern Recognition (CVPR)},
  year={2026}
  }

@inproceedings{xie2019dim,
  author    = {Xie, Cihang and Zhang, Zhishuai and Zhou, Yuyin and Bai, Song and Wang, Jianyu and Ren, Zhou and Yuille, Alan L.},
  title     = {{Improving Transferability of Adversarial Examples with Input Diversity}},
  booktitle = {Proceedings of the IEEE/CVF Conference on Computer Vision and Pattern Recognition (CVPR)},
  pages     = {2730--2739},
  year      = {2019},
  doi       = {10.1109/CVPR.2019.00284}
}

@inproceedings{dong2019tim,
  author    = {Dong, Yinpeng and Pang, Tianyu and Su, Hang and Zhu, Jun},
  title     = {{Evading Defenses to Transferable Adversarial Examples by Translation-Invariant Attacks}},
  booktitle = {Proceedings of the IEEE/CVF Conference on Computer Vision and Pattern Recognition (CVPR)},
  pages     = {4312--4321},
  year      = {2019},
  doi       = {10.1109/CVPR.2019.00444}
}

@inproceedings{guo2025ops,
  author    = {Guo, Yu and Liu, Weiquan and Xu, Qingshan and Zheng, Shijun and Huang, Shujun and Zang, Yu and Shen, Siqi and Wen, Chenglu and Wang, Cheng},
  title     = {{Boosting Adversarial Transferability through Augmentation in Hypothesis Space}},
  booktitle = {Proceedings of the IEEE/CVF Conference on Computer Vision and Pattern Recognition (CVPR)},
  pages     = {19175--19185},
  year      = {2025},
  doi       = {10.1109/CVPR52734.2025.01786}
}

@inproceedings{chen2018ead,
  author    = {Chen, Pin-Yu and Sharma, Yash and Zhang, Huan and Yi, Jinfeng and Hsieh, Cho-Jui},
  title     = {{EAD: Elastic-Net Attacks to Deep Neural Networks via Adversarial Examples}},
  booktitle = {Proceedings of the AAAI Conference on Artificial Intelligence (AAAI)},
  pages     = {10--17},
  year      = {2018},
  eprint    = {1709.04114},
  archivePrefix = {arXiv},
  primaryClass  = {stat.ML}
}

@inproceedings{modas2019sparsefool,
  author    = {Modas, Apostolos and Moosavi-Dezfooli, Seyed-Mohsen and Frossard, Pascal},
  title     = {{SparseFool: A Few Pixels Make a Big Difference}},
  booktitle = {Proceedings of the IEEE/CVF Conference on Computer Vision and Pattern Recognition (CVPR)},
  pages     = {9087--9096},
  year      = {2019},
  doi       = {10.1109/CVPR.2019.00930}
}

@inproceedings{croce2021l1apgd,
  author    = {Croce, Francesco and Hein, Matthias},
  title     = {{Mind the Box: $\ell_1$-APGD for Sparse Adversarial Attacks on Image Classifiers}},
  booktitle = {Proceedings of the International Conference on Machine Learning (ICML)},
  year      = {2021},
  eprint    = {2103.01208},
  archivePrefix = {arXiv},
  primaryClass  = {cs.LG}
}

@inproceedings{xu2019strattack,
  author    = {Xu, Kaidi and Liu, Sijia and Zhao, Pu and Chen, Pin-Yu and Zhang, Huan and Fan, Quanfu and Erdogmus, Deniz and Wang, Yanzhi and Lin, Xue},
  title     = {{Structured Adversarial Attack: Towards General Implementation and Better Interpretability}},
  booktitle = {Proceedings of the International Conference on Learning Representations (ICLR)},
  year      = {2019},
  eprint    = {1808.01664},
  archivePrefix = {arXiv},
  primaryClass  = {cs.LG}
}

@article{imtiaz2025saif,
  author  = {Imtiaz, Tooba and Kohler, Morgan R. and Miller, Jared F. and Wang, Zifeng and Eskandar, Masih and Sznaier, Mario and Camps, Octavia and Dy, Jennifer},
  title   = {{SAIF: Sparse Adversarial and Imperceptible Attack Framework}},
  journal = {Transactions on Machine Learning Research (TMLR)},
  year    = {2025},
  url     = {https://openreview.net/forum?id=YZL29eJ5j1}
}

@inproceedings{wang2024egstssa,
  author    = {Ming, Di and Ren, Peng and Wang, Yunlong and Feng, Xin},
  title     = {{Transferable Structural Sparse Adversarial Attack via Exact Group Sparsity Training}},
  booktitle = {Proceedings of the IEEE/CVF Conference on Computer Vision and Pattern Recognition (CVPR)},
  pages     = {24696--24705},
  year      = {2024}
}

@misc{zimmer2023gse,
  author        = {Sadiku, Shpresim and Wagner, Moritz and Pokutta, Sebastian},
  title         = {{GSE: Group-wise Sparse and Explainable Adversarial Attacks}},
  year          = {2023},
  eprint        = {2311.17434},
  archivePrefix = {arXiv},
  primaryClass  = {cs.LG}
}

@inproceedings{fan2020sapf,
  author    = {Fan, Yanbo and Wu, Baoyuan and Li, Tuanhui and Zhang, Yong and Li, Mingyang and Li, Zhifeng and Yang, Yujiu},
  title     = {{Sparse Adversarial Attack via Perturbation Factorization}},
  booktitle = {Proceedings of the European Conference on Computer Vision (ECCV)},
  pages     = {35--50},
  year      = {2020},
  doi       = {10.1007/978-3-030-58542-6\_3}
}

@inproceedings{zhao2020percal,
  author    = {Zhao, Zhengyu and Liu, Zhuoran and Larson, Martha},
  title     = {{Towards Large Yet Imperceptible Adversarial Image Perturbations with Perceptual Color Distance}},
  booktitle = {Proceedings of the IEEE/CVF Conference on Computer Vision and Pattern Recognition (CVPR)},
  pages     = {1039--1048},
  year      = {2020},
  doi       = {10.1109/CVPR42600.2020.00112}
}

@inproceedings{sharif2018suitability,
	author    = {Sharif, Mahmood and Bauer, Lujo and Reiter, Michael K.},
	title     = {On the Suitability of Lp-Norms for Creating and Preventing
	Adversarial Examples},
	booktitle = {Proceedings of the IEEE Conference on Computer Vision and
	Pattern Recognition (CVPR) Workshops},
	pages     = {1605--1613},
	year      = {2018}
}

@inproceedings{luo2022ssah,
  author    = {Luo, Cheng and Lin, Qinliang and Xie, Weicheng and Wu, Bizhu and Xie, Jinheng and Shen, Linlin},
  title     = {{Frequency-Driven Imperceptible Adversarial Attack on Semantic Similarity}},
  booktitle = {Proceedings of the IEEE/CVF Conference on Computer Vision and Pattern Recognition (CVPR)},
  pages     = {15315--15324},
  year      = {2022},
  doi       = {10.1109/CVPR52688.2022.01488}
}

@article{du2023ps,
  author  = {Zhang, Yaoyuan and Tan, Yu-an and Sun, Haipeng and Zhao, Yuhang and Zhang, Quanxing and Li, Yuanzhang},
  title   = {{Improving the Invisibility of Adversarial Examples with Perceptually Adaptive Perturbation}},
  journal = {Information Sciences},
  volume  = {635},
  pages   = {126--137},
  year    = {2023},
  doi     = {10.1016/j.ins.2023.03.139}
}

@inproceedings{hu2024came,
  author    = {Xia, Hui and Zhang, Rui and Kang, Zi and Jiang, Shuliang and Xu, Shuo},
  title     = {{Enhance Stealthiness and Transferability of Adversarial Attacks with Class Activation Mapping Ensemble Attack}},
  booktitle = {Proceedings of the Network and Distributed System Security Symposium (NDSS)},
  year      = {2024}
}

@misc{guesmi2025tesser,
  author        = {Guesmi, Amira and Ouni, Bassem and Shafique, Muhammad},
  title         = {{TESSER: Transfer-Enhancing Adversarial Attacks from Vision Transformers via Spectral and Semantic Regularization}},
  year          = {2025},
  eprint        = {2505.19613},
  archivePrefix = {arXiv},
  primaryClass  = {cs.CV}
}

@article{chen2024diffattack,
  author  = {Chen, Jianqi and Chen, Hao and Chen, Keyan and Zhang, Yilan and Zou, Zhengxia and Shi, Zhenwei},
  title   = {{Diffusion Models for Imperceptible and Transferable Adversarial Attack}},
  journal = {IEEE Transactions on Pattern Analysis and Machine Intelligence (TPAMI)},
  volume  = {47},
  number  = {2},
  pages   = {961--977},
  year    = {2025},
  doi     = {10.1109/TPAMI.2024.3480519}
}

@inproceedings{li2024advad,
  author    = {Li, Jin and He, Ziqiang and Luo, Anwei and Hu, Jian-Fang and Wang, Z. Jane and Kang, Xiangui},
  title     = {{AdvAD: Exploring Non-Parametric Diffusion for Imperceptible Adversarial Attacks}},
  booktitle = {Advances in Neural Information Processing Systems (NeurIPS)},
  year      = {2024}
}

@inproceedings{wang2003msssim,
  author    = {Wang, Zhou and Simoncelli, Eero P. and Bovik, Alan C.},
  title     = {{Multiscale Structural Similarity for Image Quality Assessment}},
  booktitle = {Proceedings of the 37th Asilomar Conference on Signals, Systems and Computers (ACSSC)},
  volume    = {2},
  pages     = {1398--1402},
  year      = {2003},
  doi       = {10.1109/ACSSC.2003.1292216}
}

@inproceedings{zhang2018lpips,
  author    = {Zhang, Richard and Isola, Phillip and Efros, Alexei A. and Shechtman, Eli and Wang, Oliver},
  title     = {{The Unreasonable Effectiveness of Deep Features as a Perceptual Metric}},
  booktitle = {Proceedings of the IEEE Conference on Computer Vision and Pattern Recognition (CVPR)},
  pages     = {586--595},
  year      = {2018},
  doi       = {10.1109/CVPR.2018.00068}
}

@article{ding2020dists,
  author  = {Ding, Keyan and Ma, Kede and Wang, Shiqi and Simoncelli, Eero P.},
  title   = {{Image Quality Assessment: Unifying Structure and Texture Similarity}},
  journal = {IEEE Transactions on Pattern Analysis and Machine Intelligence (TPAMI)},
  volume  = {44},
  number  = {5},
  pages   = {2567--2581},
  year    = {2022},
  doi     = {10.1109/TPAMI.2020.3045810}
}

@article{chen2024topiq,
  author  = {Chen, Chaofeng and Mo, Jiadi and Hou, Jingwen and Wu, Haoning and Liao, Liang and Sun, Wenxiu and Yan, Qiong and Lin, Weisi},
  title   = {{TOPIQ: A Top-Down Approach from Semantics to Distortions for Image Quality Assessment}},
  journal = {IEEE Transactions on Image Processing},
  volume  = {33},
  pages   = {2404--2418},
  year    = {2024},
  doi     = {10.1109/TIP.2024.3378466}
}

@inproceedings{wang2023clipiqa,
  author    = {Wang, Jianyi and Chan, Kelvin C. K. and Loy, Chen Change},
  title     = {{Exploring CLIP for Assessing the Look and Feel of Images}},
  booktitle = {Proceedings of the AAAI Conference on Artificial Intelligence (AAAI)},
  pages     = {2555--2563},
  year      = {2023},
  doi       = {10.1609/aaai.v37i2.25353}
}

@article{mittal2013niqe,
  author  = {Mittal, Anish and Soundararajan, Rajiv and Bovik, Alan C.},
  title   = {{Making a ``Completely Blind'' Image Quality Analyzer}},
  journal = {IEEE Signal Processing Letters},
  volume  = {20},
  number  = {3},
  pages   = {209--212},
  year    = {2013},
  doi     = {10.1109/LSP.2012.2227726}
}

@inproceedings{heusel2017fid,
  author    = {Heusel, Martin and Ramsauer, Hubert and Unterthiner, Thomas and Nessler, Bernhard and Hochreiter, Sepp},
  title     = {{GANs Trained by a Two Time-Scale Update Rule Converge to a Local Nash Equilibrium}},
  booktitle = {Advances in Neural Information Processing Systems (NeurIPS)},
  year      = {2017}
}

@misc{ghazanfari2023rlpips,
  author        = {Ghazanfari, Sara and Garg, Siddharth and Krishnamurthy, Prashanth and Khorrami, Farshad and Araujo, Alexandre},
  title         = {{R-LPIPS: An Adversarially Robust Perceptual Similarity Metric}},
  year          = {2023},
  eprint        = {2307.15157},
  archivePrefix = {arXiv},
  primaryClass  = {cs.CV}
}

@article{ghildyal2023attackingiqa,
  author  = {Ghildyal, Abhijay and Liu, Feng},
  title   = {{Attacking Perceptual Similarity Metrics}},
  journal = {Transactions on Machine Learning Research (TMLR)},
  year    = {2023},
  url     = {https://openreview.net/forum?id=r9vGSpbbRO}
}

@article{sharma2005ciede2000,
  author  = {Sharma, Gaurav and Wu, Wencheng and Dalal, Edul N.},
  title   = {{The CIEDE2000 Color-Difference Formula: Implementation Notes, Supplementary Test Data, and Mathematical Observations}},
  journal = {Color Research \& Application},
  volume  = {30},
  number  = {1},
  pages   = {21--30},
  year    = {2005},
  doi     = {10.1002/col.20070}
}

@article{condat2013primaldual,
  author  = {Condat, Laurent},
  title   = {{A Primal-Dual Splitting Method for Convex Optimization Involving Lipschitzian, Proximable and Linear Composite Terms}},
  journal = {Journal of Optimization Theory and Applications},
  volume  = {158},
  number  = {2},
  pages   = {460--479},
  year    = {2013},
  doi     = {10.1007/s10957-012-0245-9}
}

@article{vu2013splitting,
  author  = {V{\~u}, B{\`a}ng C{\^o}ng},
  title   = {{A Splitting Algorithm for Dual Monotone Inclusions Involving Cocoercive Operators}},
  journal = {Advances in Computational Mathematics},
  volume  = {38},
  number  = {3},
  pages   = {667--681},
  year    = {2013},
  doi     = {10.1007/s10444-011-9254-8}
}

@inproceedings{duchi2008l1projection,
  author    = {Duchi, John and Shalev-Shwartz, Shai and Singer, Yoram and Chandra, Tushar},
  title     = {{Efficient Projections onto the $\ell_1$-Ball for Learning in High Dimensions}},
  booktitle = {Proceedings of the 25th International Conference on Machine Learning (ICML)},
  pages     = {272--279},
  year      = {2008},
  doi       = {10.1145/1390156.1390191}
}

@article{zhao2025revisiting,
	title={Revisiting transferable adversarial images: Systemization, evaluation, and new insights},
	author={Zhao, Zhengyu and Zhang, Hanwei and Li, Renjue and Sicre, Ronan and Amsaleg, Laurent and Backes, Michael and Li, Qi and Wang, Qian and Shen, Chao},
	journal={IEEE Transactions on Pattern Analysis and Machine Intelligence},
	year={2025},
	publisher={IEEE}
}

@article{wu2024aml,
  author  = {Wu, Baoyuan and Zhu, Zihao and Liu, Li and Liu, Qingshan and He, Zhaofeng and Lyu, Siwei},
  title   = {{Attacks in Adversarial Machine Learning: A Systematic Survey from the Life-Cycle Perspective}},
  journal = {arXiv preprint arXiv:2302.09457},
  year    = {2024},
  eprint  = {2302.09457},
  archivePrefix = {arXiv},
  primaryClass  = {cs.LG}
}

@article{liu2024stealthiness,
	title={Stealthiness assessment of adversarial perturbation: From a visual perspective},
	author={Liu, Hangcheng and Zhou, Yuan and Yang, Ying and Zhao, Qingchuan and Zhang, Tianwei and Xiang, Tao},
	journal={IEEE Transactions on Information Forensics and Security},
	volume={20},
	pages={898--913},
	year={2025},
	publisher={IEEE}
}

@book{bauschke2017convex,
  author    = {Bauschke, Heinz H. and Combettes, Patrick L.},
  title     = {Convex Analysis and Monotone Operator Theory in {H}ilbert Spaces},
  publisher = {Springer},
  edition   = {2nd},
  year      = {2017},
  doi       = {10.1007/978-3-319-48311-5}
}

@inproceedings{wu2020skip,
  author    = {Wu, Dongxian and Wang, Yisen and Xia, Shu-Tao and Bailey, James and Ma, Xingjun},
  title     = {Skip Connections Matter: On the Transferability of Adversarial Examples Generated with {ResNets}},
  booktitle = {International Conference on Learning Representations (ICLR)},
  year      = {2020},
  url       = {https://openreview.net/forum?id=BJlRs34Fvr}
}

@inproceedings{lin2011ladmap,
  author    = {Lin, Zhouchen and Liu, Risheng and Su, Zhixun},
  title     = {Linearized Alternating Direction Method with Adaptive Penalty for Low-Rank Representation},
  booktitle = {Advances in Neural Information Processing Systems (NIPS)},
  pages     = {612--620},
  year      = {2011}
}

@book{lin2022admm_book,
  author    = {Lin, Zhouchen and Li, Huan and Fang, Cong},
  title     = {Alternating Direction Method of Multipliers for Machine Learning},
  publisher = {Springer Singapore},
  year      = {2022},
  doi       = {10.1007/978-981-16-9840-8},
  isbn      = {978-981-16-9839-2}
}

\clearpage
\appendices
\newpage
\section{Algorithmic Derivation and Convergence Discussion}
\label{sec:appendix_theory}

This appendix fills in the derivation of ST summarized in Section~\ref{sec:method} (Algorithm~\ref{alg:st}), states a convergence guarantee under an idealized smooth-convex surrogate model, explains the practical $\alpha\beta$ coupling heuristic used throughout the experiments, and provides the explicit update rule (Algorithm~\ref{alg:st_mdcs}) used by the MDCS-based variants.

\subsection{From the Regularized Objective to a Saddle Formulation}
We work in the perturbation space; recall that the feasible set
\begin{equation}
\Q=\left\{\deltaadv:\|\deltaadv\|_{\infty}\le\epsilon,\; \xclean+\deltaadv\in[0,1]^d\right\}
\end{equation}
is the intersection of the $L_\infty$-ball of radius $\epsilon$ with the image-domain box $[0,1]^d$ recentered at $\xclean$. The starting point of the derivation is the regularized formulation~\eqref{eq:st_main_problem} introduced in Section~\ref{sec:method}, where the budget and image-range constraints are folded into the indicator $\iota_{\Q}$ so that the two non-smooth terms—the feasibility indicator $\iota_{\Q}$ and the $L_\infty$ regularizer $\lambda\|\cdot\|_{\infty}$—are made explicit. For compactness, define
\begin{equation}
f(\deltaadv)\triangleq -J(\deltaadv),\qquad
g(\deltaadv)\triangleq \iota_{\Q}(\deltaadv),\qquad
h(\deltaadv)\triangleq \lambda\|\deltaadv\|_{\infty},
\end{equation}
where $J(\deltaadv)$ is the attack objective evaluated at $\xclean+\deltaadv$. With this notation, problem~\eqref{eq:st_main_problem} can be written compactly as
\begin{equation}
\label{eq:app_primal_problem}
\min_{\deltaadv}\; f(\deltaadv)+g(\deltaadv)+h(\deltaadv),
\end{equation}
in which $f$ is smooth and both $g$ and $h$ are non-smooth. The next proposition expresses the $L_\infty$ regularizer $h$ in dual form.

\begin{proposition}[Support-function representation of the $L_\infty$ regularizer]
\label{prop:linf_support}
For any $\lambda\ge 0$, let $\Qone\triangleq\{\bm{y}\in\R^d:\|\bm{y}\|_1\le\lambda\}$. Then
\begin{equation}
\lambda\|\deltaadv\|_{\infty}=\max_{\bm{y}\in\Qone}\langle \deltaadv,\bm{y}\rangle,
\end{equation}
i.e., $\lambda\|\cdot\|_{\infty}$ is the support function of $\Qone$, which is the standard duality between the $L_1$ and $L_\infty$ norms.
\end{proposition}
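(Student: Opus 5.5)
The plan is to prove the identity by a two-sided inequality: an upper bound from Hölder's inequality, then attainment of that bound by an explicit maximizer in $\Qone$. First dispose of the degenerate case $\lambda=0$. There $\Qone=\{\bm{0}\}$, so the right-hand side is $\langle\deltaadv,\bm{0}\rangle=0=\lambda\|\deltaadv\|_\infty$. Henceforth assume $\lambda>0$. Note that $\Qone$ is nonempty and compact, and $\bm{y}\mapsto\langle\deltaadv,\bm{y}\rangle$ is continuous, so the maximum exists; it then suffices to identify its value.

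For the upper bound, take any $\bm{y}\in\Qone$ and apply Hölder's inequality coordinatewise:
\begin{equation*}
\langle\deltaadv,\bm{y}\rangle=\sum_{i=1}^d \delta_i y_i\le\sum_{i=1}^d|\delta_i|\,|y_i|\le\|\deltaadv\|_\infty\|\bm{y}\|_1\le\lambda\|\deltaadv\|_\infty .
\end{equation*}
Hence $\max_{\bm{y}\in\Qone}\langle\deltaadv,\bm{y}\rangle\le\lambda\|\deltaadv\|_\infty$.

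For the matching lower bound, construct a feasible point that attains the bound. Pick an index $i^\star\in\arg\max_i|\delta_i|$ and set $\bm{y}^\star=\lambda\,s\,\bm{e}_{i^\star}$. Here $s=\operatorname{sign}(\delta_{i^\star})$, with $s=1$ if $\delta_{i^\star}=0$ (in which case $\deltaadv=\bm{0}$ and both sides vanish). Then $\|\bm{y}^\star\|_1=\lambda$, so $\bm{y}^\star\in\Qone$. Moreover $\langle\deltaadv,\bm{y}^\star\rangle=\lambda|\delta_{i^\star}|=\lambda\|\deltaadv\|_\infty$. Combining the two bounds gives the equality.

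Finally, I would remark that this says precisely that $h=\lambda\|\cdot\|_\infty=\sigma_{\Qone}$. Equivalently, $h^*=\iota_{\Qone}$, which is the Fenchel-conjugate form used to pass to the saddle problem \eqref{eq:st_saddle}. There is no real obstacle here. The only care needed is the degenerate cases $\lambda=0$ and $\deltaadv=\bm{0}$, and noting that the maximum is attained rather than only approached as a supremum.
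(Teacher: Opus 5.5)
Your proof is correct and matches the paper's argument: a H\"older upper bound, followed by attainment at $\lambda\,\mathrm{sign}(\delta_{i^*})\bm{e}_{i^*}$ for $i^*\in\arg\max_i|\delta_i|$. The only differences are cosmetic: you set $s=1$ when $\delta_{i^*}=0$ where the paper uses $\mathrm{sign}(0)=0$, and you treat $\lambda=0$ separately even though the general construction already covers it.
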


\begin{proof}
For any $\bm{y}\in\Qone$, Hölder's inequality gives
\begin{equation}
\langle \deltaadv,\bm{y}\rangle \le \|\bm{y}\|_1\|\deltaadv\|_{\infty} \le \lambda\|\deltaadv\|_{\infty},
\end{equation}
so the maximum is at most $\lambda\|\deltaadv\|_{\infty}$. To attain the bound, pick any $i^*\in\arg\max_{i}|\deltaadv_i|$ and set
\begin{equation}
\bm{y}^*=\lambda\,\mathrm{sign}(\deltaadv_{i^*})\,\bm{e}_{i^*},
\end{equation}
where $\bm{e}_{i^*}$ is the $i^*$-th canonical basis vector and we adopt the convention $\mathrm{sign}(0)\triangleq 0$. Then $\|\bm{y}^*\|_1=\lambda|\mathrm{sign}(\deltaadv_{i^*})|\le\lambda$, hence $\bm{y}^*\in\Qone$, and
\begin{equation}
\langle \deltaadv,\bm{y}^*\rangle=\lambda|\deltaadv_{i^*}|=\lambda\|\deltaadv\|_{\infty}.
\end{equation}
The degenerate case $\deltaadv=\bm{0}$ is recovered automatically: the convention $\mathrm{sign}(0)=0$ yields $\bm{y}^*=\bm{0}\in\Qone$, and both sides of the identity equal zero.
\end{proof}

By Proposition~\ref{prop:linf_support}, the primal problem~\eqref{eq:app_primal_problem} is equivalent to the saddle problem
\begin{equation}
\label{eq:app_saddle_direct}
\min_{\deltaadv}\max_{\bm{y}\in\Qone}\;
f(\deltaadv)+g(\deltaadv)+\langle \deltaadv,\bm{y}\rangle.
\end{equation}

\begin{proposition}[Convex conjugate of the $L_\infty$ regularizer]
\label{prop:conjugate_linf}
For any $\lambda\ge 0$, the convex conjugate of $h(\deltaadv)=\lambda\|\deltaadv\|_{\infty}$ is
\begin{equation}
h^*(\bm{y})=\iota_{\Qone}(\bm{y}),
\qquad
\Qone=\{\bm{y}\in\R^d:\|\bm{y}\|_1\le\lambda\}.
\end{equation}
\end{proposition}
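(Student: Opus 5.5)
The plan is to compute $h^*(\bm{y})=\sup_{\deltaadv}\bigl[\langle\deltaadv,\bm{y}\rangle-\lambda\|\deltaadv\|_\infty\bigr]$ directly, splitting into the two cases $\|\bm{y}\|_1\le\lambda$ and $\|\bm{y}\|_1>\lambda$. Proposition~\ref{prop:linf_support} already supplies the inequality we need in the first case, and an explicit ray of perturbations handles the second.

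\emph{Case $\bm{y}\in\Qone$.} By Hölder's inequality, as in the proof of Proposition~\ref{prop:linf_support},
\[
\langle\deltaadv,\bm{y}\rangle\le\|\bm{y}\|_1\|\deltaadv\|_\infty\le\lambda\|\deltaadv\|_\infty .
\]
Hence every term inside the supremum is $\le 0$. The value $0$ is attained at $\deltaadv=\bm{0}$, so $h^*(\bm{y})=0=\iota_{\Qone}(\bm{y})$.

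\emph{Case $\|\bm{y}\|_1>\lambda$.} Consider the ray $\deltaadv=s\,\mathrm{sign}(\bm{y})$ with $s>0$ and the sign taken componentwise. Along it, $\|\deltaadv\|_\infty\le s$ and $\langle\deltaadv,\bm{y}\rangle=s\|\bm{y}\|_1$, so the objective is at least
\[
s\bigl(\|\bm{y}\|_1-\lambda\bigr)\to+\infty \quad\text{as } s\to\infty .
\]
Therefore $h^*(\bm{y})=+\infty=\iota_{\Qone}(\bm{y})$.

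The only delicate point is the edge case $\lambda=0$. There $h\equiv0$ and $\Qone=\{\bm{0}\}$. The argument above still covers it: every $\bm{y}\ne\bm{0}$ satisfies $\|\bm{y}\|_1>0=\lambda$, so the divergent ray works and gives $h^*=\iota_{\{\bm{0}\}}$, as required.

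As a consistency check, $h$ is proper, convex and lower semicontinuous, so Fenchel--Moreau gives $h=h^{**}=\sigma_{\Qone}$. This recovers Proposition~\ref{prop:linf_support}, although the direct computation does not depend on it.
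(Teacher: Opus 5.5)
Your proof is correct and matches the paper's argument: Hölder's inequality, with the supremum attained at $\bm{0}$, when $\|\bm{y}\|_1\le\lambda$, and the divergent ray $t\,\mathrm{sign}(\bm{y})$ when $\|\bm{y}\|_1>\lambda$. Where the paper uses the equality $\|\mathrm{sign}(\bm{y})\|_\infty=1$, your bound $\|\deltaadv\|_\infty\le s$ suffices because $\lambda\ge 0$, and your explicit treatment of $\lambda=0$ is a harmless addition.
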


\begin{proof}
By definition,
\begin{equation}
h^*(\bm{y})=\sup_{\bm{z}}\left\{\langle \bm{y},\bm{z}\rangle-\lambda\|\bm{z}\|_{\infty}\right\}.
\end{equation}
If $\|\bm{y}\|_1\le\lambda$, then Hölder's inequality gives
\begin{equation}
\langle \bm{y},\bm{z}\rangle \le \|\bm{y}\|_1\|\bm{z}\|_{\infty}\le \lambda\|\bm{z}\|_{\infty},
\end{equation}
so the supremum is upper bounded by $0$ and is attained at $\bm{z}=0$. Hence $h^*(\bm{y})=0$.

If $\|\bm{y}\|_1>\lambda$, then in particular $\bm{y}\neq\bm{0}$, so $\|\mathrm{sign}(\bm{y})\|_{\infty}=1$ where $\mathrm{sign}(0)=0$. Choose $\bm{z}=t\,\mathrm{sign}(\bm{y})$ with $t>0$, hence $\|\bm{z}\|_{\infty}=t$ and $\langle\bm{y},\bm{z}\rangle=t\|\bm{y}\|_1$. Therefore
\begin{equation}
\langle \bm{y},\bm{z}\rangle-\lambda\|\bm{z}\|_{\infty}
= t(\|\bm{y}\|_1-\lambda)\to+\infty
\end{equation}
as $t\to\infty$. Therefore $h^*(\bm{y})=+\infty$. Combining the two cases proves the claim.
\end{proof}

Using Proposition~\ref{prop:conjugate_linf}, since $h^*(\bm{y})=\iota_{\Qone}(\bm{y})$ takes value $+\infty$ outside $\Qone$, the dual constraint can be absorbed into $-h^*$ and the saddle problem~\eqref{eq:app_saddle_direct} is equivalent to its unconstrained-dual form
\begin{equation}
\label{eq:app_saddle_conjugate}
\min_{\deltaadv}\max_{\bm{y}\in\R^d}\;
f(\deltaadv)+g(\deltaadv)+\langle \deltaadv,\bm{y}\rangle-h^*(\bm{y}).
\end{equation}

\subsection{From the Saddle Problem to the Primal-Dual Iteration}
\label{subsec:saddle_to_iteration}

This subsection derives the exact-gradient Condat--V\~{u} iteration from the saddle problem~\eqref{eq:app_saddle_conjugate} in four steps: (i) the composite structure of the saddle objective (Proposition~\ref{prop:cv_structure}); (ii) its first-order optimality conditions (Proposition~\ref{prop:optimality_conditions}); (iii) the equivalent fixed-point characterization (Proposition~\ref{prop:fixed_point_form}); and (iv) the resulting primal-dual iteration with the proximal maps specialized to indicator functions (Propositions~\ref{prop:prox_maps}--\ref{prop:exact_st_iteration}).

\begin{proposition}[Composite structure]
\label{prop:cv_structure}
The saddle problem \eqref{eq:app_saddle_conjugate} has the standard primal-dual composite form
\begin{equation}
\min_{\deltaadv}\max_{\vdual}\; f(\deltaadv)+g(\deltaadv)+\langle \deltaadv,\vdual\rangle-h^*(\vdual),
\end{equation}
with one smooth term $f(\deltaadv)=-J(\deltaadv)$ and two non-smooth terms
\begin{equation}
g(\deltaadv)=\iota_{\Q}(\deltaadv),\qquad h^*(\vdual)=\iota_{\Qone}(\vdual).
\end{equation}
Consequently, primal-dual splitting treats $g$ and $h^*$ by independent proximal maps on the primal and dual variables.
\end{proposition}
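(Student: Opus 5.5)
The argument is mostly a matter of matching the saddle objective \eqref{eq:app_saddle_conjugate} term by term against the generic Condat--V\~{u} template
\begin{equation*}
\min_{\deltaadv}\max_{\vdual}\; f(\deltaadv)+g(\deltaadv)+\langle K\deltaadv,\vdual\rangle-h^*(\vdual).
\end{equation*}
The coupling operator is $K=I$, and the dual variable $\bm{y}$ is renamed $\vdual$. Once the terms are identified, we check the regularity hypotheses of that template for each one.

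\textbf{Identification.} The terms match as follows.
\begin{itemize}
\item The smooth term is $f(\deltaadv)=-J(\deltaadv)$. Its gradient $-\nabla_{\deltaadv}J$ is the only place the network enters. In the idealized model that the appendix announces, we assume $\nabla J$ is Lipschitz.
\item The primal non-smooth term is $g=\iota_{\Q}$. The set $\Q$ is the intersection of the box $[-\epsilon,\epsilon]^d$ with the translated box $[-\xclean,\bm{1}-\xclean]$. It is a box, hence closed and convex. It is nonempty because $\bm{0}\in\Q$, since $\xclean\in[0,1]^d$. Therefore $g$ is proper, lower semicontinuous and convex.
\item The dual non-smooth term is $h^*=\iota_{\Qone}$, by Proposition~\ref{prop:conjugate_linf}. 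The set $\Qone$ is a closed convex $L_1$ ball that contains $\bm{0}$ for every $\lambda\ge0$. So $h^*$ is also proper, lower semicontinuous and convex.
\item By Fenchel--Moreau, $h^{**}=h$, since $h=\lambda\|\cdot\|_\infty$ is a finite convex function. So the dual term really is the conjugate of the regularizer we started from.
\end{itemize}

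\textbf{Consequence.} Because $g$ depends only on $\deltaadv$ and $h^*$ depends only on $\vdual$, the non-smooth part of the objective is separable across the two blocks. The coupling $\langle\deltaadv,\vdual\rangle$ is bilinear and smooth. Condat--V\~{u} splitting therefore evaluates $\mathrm{prox}_{\alpha g}$ on the primal block and $\mathrm{prox}_{\beta h^*}$ on the dual block independently. Each of these is a proximal map of a proper lower semicontinuous convex function, so it is single-valued and well defined.

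\textbf{Main caveat.} The only delicate step is that $f=-J$ is not convex for a neural network. The structural claim is purely about the form of the objective, so it does not need convexity. Any use of the convergence theory, however, must be placed under the idealized smooth-convex surrogate assumption. I would state this explicitly rather than claim it in general.
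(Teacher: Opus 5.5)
Your proposal is correct and follows the paper's route. The paper's proof is the same term-by-term identification with $K=I$, relying only on the definitions and Proposition~\ref{prop:conjugate_linf} to get $h^*=\iota_{\Qone}$. Your extra checks (nonemptiness and closed convexity of $\Q$ and $\Qone$, Fenchel--Moreau, single-valuedness of the proximal maps, and the caveat that $f=-J$ is smooth and convex only under Assumption~\ref{ass:smooth_convex}) are accurate refinements that the paper leaves implicit, and they do not change the argument.
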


\begin{proof}
The claim follows directly from the definitions above and Proposition~\ref{prop:conjugate_linf}, which identifies the convex conjugate of $h(\deltaadv)=\lambda\|\deltaadv\|_{\infty}$ as $h^*=\iota_{\Qone}$.
\end{proof}

\begin{proposition}[First-order optimality conditions]
\label{prop:optimality_conditions}
A pair $(\deltaadv^*,\vdual^*)$ is a first-order stationary point of the saddle problem~\eqref{eq:app_saddle_conjugate} if and only if
\begin{align}
\bm{0} &\in \nabla f(\deltaadv^*)+N_{\Q}(\deltaadv^*)+\vdual^*, \label{eq:app_opt_primal}\\
\bm{0} &\in -\deltaadv^*+N_{\Qone}(\vdual^*), \label{eq:app_opt_dual}
\end{align}
where $N_{\Q}(\deltaadv^*)=\partial\iota_{\Q}(\deltaadv^*)$ and $N_{\Qone}(\vdual^*)=\partial\iota_{\Qone}(\vdual^*)$ denote the normal cones of $\Q$ and $\Qone$ at $\deltaadv^*$ and $\vdual^*$, respectively.
\end{proposition}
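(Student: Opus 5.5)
The plan is to treat the saddle function $\mathcal{L}(\deltaadv,\vdual)=f(\deltaadv)+g(\deltaadv)+\langle\deltaadv,\vdual\rangle-h^*(\vdual)$ from Proposition~\ref{prop:cv_structure} one variable at a time. A first-order stationary point is a pair at which $\bm{0}$ lies in the (limiting/Clarke) partial subdifferential of $\mathcal{L}(\cdot,\vdual^*)$ at $\deltaadv^*$. It must also satisfy $\bm{0}\in\partial\bigl(-\mathcal{L}(\deltaadv^*,\cdot)\bigr)(\vdual^*)$. Since $f=-J$ need not be convex, I will state the primal condition with respect to the limiting subdifferential. In the convex case this reduces to the usual saddle-point characterization.

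\textbf{Primal condition.} Fix $\vdual^*$. The map $\deltaadv\mapsto f(\deltaadv)+\langle\deltaadv,\vdual^*\rangle$ is continuously differentiable with gradient $\nabla f(\deltaadv)+\vdual^*$, and $g=\iota_{\Q}$ is proper, lsc and convex because $\Q$ is a nonempty closed convex set. I will apply the smooth-plus-nonsmooth sum rule $\partial(\phi+g)=\nabla\phi+\partial g$, citing Bauschke--Combettes~\cite{bauschke2017convex} for the convex case and the standard limiting-subdifferential rule otherwise. I will then use that the subdifferential of an indicator of a closed convex set is its normal cone, $\partial\iota_{\Q}=N_{\Q}$. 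Together these give \eqref{eq:app_opt_primal}. Nonemptiness of $\Q$ should be checked: $\bm{0}\in\Q$ since $\xclean\in[0,1]^d$.

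\textbf{Dual condition.} Fix $\deltaadv^*$. The function $\vdual\mapsto-\mathcal{L}(\deltaadv^*,\vdual)=-\langle\deltaadv^*,\vdual\rangle+\iota_{\Qone}(\vdual)+\text{const}$ is convex, where I use Proposition~\ref{prop:conjugate_linf} for $h^*=\iota_{\Qone}$. Maximizing over $\vdual$ is therefore equivalent to $\bm{0}\in-\deltaadv^*+N_{\Qone}(\vdual^*)$, again by the sum rule, since the linear term is smooth and $\Qone$ is nonempty, closed and convex. This is \eqref{eq:app_opt_dual}. For the converse direction, by convexity this inclusion is also sufficient for $\vdual^*$ to maximize $\mathcal{L}(\deltaadv^*,\cdot)$. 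On the primal side, the equivalence holds by the definition of stationarity.

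The main obstacle is definitional rather than computational: specifying what "first-order stationary" means when $f$ is nonconvex, so that the "if and only if" holds by construction. I would fix it as partial-subdifferential stationarity in each block. As a consistency remark, I would also note that \eqref{eq:app_opt_dual} is equivalent to $\vdual^*\in\partial h(\deltaadv^*)$ by Fenchel--Young inversion $\partial h^*=(\partial h)^{-1}$. Eliminating $\vdual^*$ then recovers $\bm{0}\in\nabla f(\deltaadv^*)+N_{\Q}(\deltaadv^*)+\partial h(\deltaadv^*)$, which is the stationarity condition of the original problem \eqref{eq:app_primal_problem}.
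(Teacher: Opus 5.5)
Your proposal is correct and follows essentially the same route as the paper: write down $\mathcal{L}$, impose the partial subdifferential conditions in $\deltaadv$ and in $\vdual$ separately, and then use $\partial\iota_{\Q}=N_{\Q}$ and $\partial h^*=\partial\iota_{\Qone}=N_{\Qone}$. Your one refinement is a good one: the paper simply asserts that $\mathcal{L}$ is convex in $\deltaadv$, which holds only under the later Assumption~\ref{ass:smooth_convex}, whereas you use the limiting subdifferential with the exact $C^1$-plus-nonsmooth sum rule, so the primal condition stays meaningful for nonconvex $f=-J$.
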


\begin{proof}
The Lagrangian of \eqref{eq:app_saddle_conjugate} is
\begin{equation}
\mathcal{L}(\deltaadv,\vdual)=f(\deltaadv)+g(\deltaadv)+\langle\deltaadv,\vdual\rangle-h^*(\vdual),
\end{equation}
which is convex in $\deltaadv$ and concave in $\vdual$. A saddle point $(\deltaadv^*,\vdual^*)$ is therefore characterized by the partial subdifferential conditions
\begin{equation}
\bm{0}\in\partial_{\deltaadv}\mathcal{L}(\deltaadv^*,\vdual^*),\qquad \bm{0}\in\partial_{\vdual}(-\mathcal{L})(\deltaadv^*,\vdual^*).
\end{equation}
The primal condition gives
\begin{equation}
\bm{0}\in\nabla f(\deltaadv^*)+\partial g(\deltaadv^*)+\vdual^*,
\end{equation}
and since $g=\iota_{\Q}$, we have $\partial g(\deltaadv^*)=N_{\Q}(\deltaadv^*)$. The dual condition gives
\begin{equation}
\bm{0}\in-\deltaadv^*+\partial h^*(\vdual^*),
\end{equation}
and since $h^*=\iota_{\Qone}$, we have $\partial h^*(\vdual^*)=N_{\Qone}(\vdual^*)$.
\end{proof}

The primal condition~\eqref{eq:app_opt_primal} states that, at a stationary point, the gradient $\nabla f(\deltaadv^*)=-\nabla J(\deltaadv^*)$, the normal cone of $\Q$, and the dual variable cancel each other. The dual condition~\eqref{eq:app_opt_dual} is equivalently the statement that $\vdual^*$ belongs to the subdifferential of the $L_\infty$ regularizer at $\deltaadv^*$.

\begin{proposition}[Fixed-point characterization]
\label{prop:fixed_point_form}
A pair $(\deltaadv^*,\vdual^*)$ satisfies \eqref{eq:app_opt_primal}--\eqref{eq:app_opt_dual} if and only if, for any step sizes $\alpha,\beta>0$,
\begin{align}
\deltaadv^*&=\proj_{\Q}\!\left(\deltaadv^*-\alpha\nabla f(\deltaadv^*)-\alpha\vdual^*\right), \label{eq:app_fp_primal}\\
\vdual^*&=\proj_{\Qone}\!\left(\vdual^*+\beta\deltaadv^*\right). \label{eq:app_fp_dual}
\end{align}
\end{proposition}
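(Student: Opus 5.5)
The plan is to reduce each line of the fixed-point system to the matching inclusion in Proposition~\ref{prop:optimality_conditions}. The tool is the standard characterization of Euclidean projection onto a nonempty closed convex set $C$:
\[
\bm{p}=\proj_C(\bm{w}) \iff \bm{w}-\bm{p}\in N_C(\bm{p}).
\]
Equivalently, $\proj_C=(I+N_C)^{-1}=\mathrm{prox}_{\iota_C}$. We then use that $N_C(\bm{p})$ is a cone, so it is invariant under multiplication by any $\alpha>0$.

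First I check that both sets qualify. $\Q$ is the intersection of the $L_\infty$-ball of radius $\epsilon$ with the box $[0,1]^d-\xclean$. Both are closed and convex, and the intersection contains $\bm{0}$ because $\xclean\in[0,1]^d$, so it is nonempty. $\Qone$ is a closed convex $L_1$-ball containing $\bm{0}$ for every $\lambda\ge 0$. I also recall the projection characterization itself, which is the variational inequality $\langle \bm{w}-\bm{p},\bm{q}-\bm{p}\rangle\le 0$ for all $\bm{q}\in C$. This is exactly the definition of $\bm{w}-\bm{p}\in N_C(\bm{p})$.

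\textbf{Primal line.} Set $\bm{w}=\deltaadv^*-\alpha\nabla f(\deltaadv^*)-\alpha\vdual^*$ and $\bm{p}=\deltaadv^*$. Then \eqref{eq:app_fp_primal} holds iff
\[
-\alpha\bigl(\nabla f(\deltaadv^*)+\vdual^*\bigr)\in N_{\Q}(\deltaadv^*).
\]
Dividing by $\alpha>0$ and using the cone property, this is equivalent to $-\nabla f(\deltaadv^*)-\vdual^*\in N_{\Q}(\deltaadv^*)$, which is precisely \eqref{eq:app_opt_primal}. One point needs care here. Condition \eqref{eq:app_opt_primal} implicitly requires $\deltaadv^*\in\Q$, since $N_{\Q}$ is empty off $\Q$. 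The projection identity likewise forces $\deltaadv^*\in\Q$. So feasibility is automatic in both directions.

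\textbf{Dual line.} Set $\bm{w}=\vdual^*+\beta\deltaadv^*$ and $\bm{p}=\vdual^*$. Then \eqref{eq:app_fp_dual} holds iff $\beta\deltaadv^*\in N_{\Qone}(\vdual^*)$. Scaling by $1/\beta$, this is equivalent to $\deltaadv^*\in N_{\Qone}(\vdual^*)$, i.e., \eqref{eq:app_opt_dual}, again with $\vdual^*\in\Qone$ built into both sides.

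Combining the two equivalences gives the claim for every $\alpha,\beta>0$. The argument uses no convexity of $f$, only differentiability, so it holds for the nonconvex attack loss at the level of stationary points. I do not expect any step to be a genuine obstacle. The only points deserving care are stating the projection/normal-cone characterization, together with the cone scaling that makes the step sizes irrelevant, and noting the implicit feasibility of $\deltaadv^*$ and $\vdual^*$.
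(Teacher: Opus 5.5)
Your proof is correct and follows essentially the same route as the paper. The paper invokes the characterization $\bm{w}\in N_C(\bm{x}) \iff \bm{x}=\proj_C(\bm{x}+\alpha\bm{w})$ for any $\alpha>0$, once with $C=\Q$, $\bm{w}=-\nabla f(\deltaadv^*)-\vdual^*$ and once with $C=\Qone$, $\bm{w}=\deltaadv^*$; you make explicit the cone-scaling step and the implicit feasibility $\deltaadv^*\in\Q$, $\vdual^*\in\Qone$, which the paper's cited lemma absorbs.
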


\begin{proof}
For a closed convex set $C$ and any $\alpha>0$, the standard subdifferential characterization of the Euclidean projection~\cite[Prop.~6.46]{bauschke2017convex} states that $\bm{w}\in N_{C}(\bm{x})$ if and only if $\bm{x}=\proj_{C}(\bm{x}+\alpha\bm{w})$. Applying this with $C=\Q$ and $\bm{w}=-\nabla f(\deltaadv^*)-\vdual^*$ gives the projection identity~\eqref{eq:app_fp_primal}; applying it with $C=\Qone$ and $\bm{w}=\deltaadv^*$ gives~\eqref{eq:app_fp_dual}.
\end{proof}

The fixed-point system above suggests a natural alternating scheme, but a plain alternation would feed a stale primal iterate into the dual update. Condat~\cite{condat2013primaldual} and V\~{u}~\cite{vu2013splitting} remove this lag via the extrapolated primal point $2\bar{\delta}_{t+1}-\deltaadv_t$, yielding the standard primal-dual iteration
\begin{align}
\bar{\delta}_{t+1} &= \mathrm{prox}_{\alpha g}\!\left(\deltaadv_t-\alpha\nabla f(\deltaadv_t)-\alpha\vdual_t\right), \label{eq:app_cv_primal_general}\\
\bar{\vdual}_{t+1} &= \mathrm{prox}_{\beta h^*}\!\left(\vdual_t+\beta(2\bar{\delta}_{t+1}-\deltaadv_t)\right). \label{eq:app_cv_dual_general}
\end{align}
When $f$ is convex with $L$-Lipschitz-continuous gradient, this Condat--V\~{u} scheme converges under the standard step-size condition stated later in Theorem~\ref{thm:cv_convergence}. We next specialize it to the ST objective.

\begin{proposition}[Proximal maps for indicator functions]
\label{prop:prox_maps}
For any $\alpha,\beta>0$,
\begin{equation}
\mathrm{prox}_{\alpha g}=\proj_{\Q},
\qquad
\mathrm{prox}_{\beta h^*}=\proj_{\Qone}.
\end{equation}
\end{proposition}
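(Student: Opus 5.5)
The plan is to unfold the definition of the proximal map and check that, for an indicator function, it reduces to a Euclidean projection. For a proper, closed, convex function $\phi$ and a step $\tau>0$, recall
\begin{equation}
\mathrm{prox}_{\tau\phi}(\bm{w})=\arg\min_{\bm{x}}\Bigl\{\tau\phi(\bm{x})+\tfrac12\|\bm{x}-\bm{w}\|_2^2\Bigr\}.
\end{equation}
When $\phi=\iota_C$ for a nonempty closed convex set $C$, we have $\tau\iota_C=\iota_C$ for every $\tau>0$, since $\tau\cdot 0=0$ and $\tau\cdot(+\infty)=+\infty$. The step size therefore drops out completely. The objective equals $+\infty$ off $C$ and equals $\tfrac12\|\bm{x}-\bm{w}\|_2^2$ on $C$. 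Its minimizer is thus the point of $C$ nearest to $\bm{w}$, which is $\proj_C(\bm{w})$.

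Applying this with $\phi=g=\iota_{\Q}$ and $\tau=\alpha$ gives $\mathrm{prox}_{\alpha g}=\proj_{\Q}$. Applying it with $\phi=h^*=\iota_{\Qone}$ and $\tau=\beta$ gives $\mathrm{prox}_{\beta h^*}=\proj_{\Qone}$. The second case uses Proposition~\ref{prop:conjugate_linf} to identify $h^*$ with $\iota_{\Qone}$.

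The only real care needed is existence and uniqueness of the minimizer, so that the projection is well defined as a single-valued map. The function $\tfrac12\|\bm{x}-\bm{w}\|_2^2$ is strongly convex and coercive. Minimizing it over a nonempty closed convex set therefore has exactly one solution.

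It remains to check that both sets qualify.
\begin{itemize}
\item $\Q$ is the intersection of the $L_\infty$-ball of radius $\epsilon$ with the shifted box $[0,1]^d-\xclean$. Both are closed convex boxes, so $\Q$ is closed and convex. It is nonempty because $\bm{0}\in\Q$, using $\xclean\in[0,1]^d$.
\item $\Qone$ is a closed convex norm ball of radius $\lambda\ge0$ containing $\bm{0}$. In the degenerate case $\lambda=0$ it reduces to $\{\bm{0}\}$, and the projection is simply the zero map, which is still consistent with the claim.
\end{itemize}

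I expect no genuine obstacle. The one point I will state explicitly is the scale invariance $\tau\iota_C=\iota_C$, because it is exactly why the identities hold for any $\alpha,\beta>0$, as the statement asserts. I may also remark that both projections are closed-form or near closed-form. $\proj_{\Q}$ is a coordinatewise clip to $[\max(-\epsilon,-\xclean_i),\min(\epsilon,1-\xclean_i)]$, since $\Q$ is a product of intervals. $\proj_{\Qone}$ is computed by the sort-based algorithm of~\cite{duchi2008l1projection}. This remark is not needed for the proposition itself.
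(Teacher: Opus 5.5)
Your proof is correct and follows essentially the same route as the paper, which simply cites the standard fact that the proximal operator of the indicator of a closed convex set is the Euclidean projection onto that set. You unpack that fact in more detail: the scale invariance $\tau\iota_C=\iota_C$, existence and uniqueness of the minimizer via strong convexity, and the check that $\Q$ and $\Qone$ are nonempty, closed, and convex.
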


\begin{proof}
Since $g=\iota_{\Q}$ and $h^*=\iota_{\Qone}$ are indicator functions of closed convex sets, their proximal operators are exactly the Euclidean projections onto those sets.
\end{proof}

\begin{proposition}[Exact-gradient ST iteration]
\label{prop:exact_st_iteration}
Substituting $K=I$, $\nabla f=-\nabla J$, and the proximal maps from Proposition~\ref{prop:prox_maps} into~\eqref{eq:app_cv_primal_general}--\eqref{eq:app_cv_dual_general} yields
\begin{align}
\bar{\delta}_{t+1} &= \proj_{\Q}\!\left(\deltaadv_t+\alpha\nabla J(\deltaadv_t)-\alpha\vdual_t\right), \label{eq:app_exact_primal}\\
\bar{\vdual}_{t+1} &= \proj_{\Qone}\!\left(\vdual_t+\beta(2\bar{\delta}_{t+1}-\deltaadv_t)\right). \label{eq:app_exact_dual}
\end{align}
\end{proposition}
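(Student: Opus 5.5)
The claim is a direct substitution into the generic Condat--V\~{u} iteration \eqref{eq:app_cv_primal_general}--\eqref{eq:app_cv_dual_general}, so the plan is to rewrite each of its two lines in turn. In the general scheme the linear coupling operator $K$ enters as $K^\top\vdual_t$ in the primal step and as $K(2\bar{\delta}_{t+1}-\deltaadv_t)$ in the dual step. With $K=I$ these reduce to $\vdual_t$ and $2\bar{\delta}_{t+1}-\deltaadv_t$, which is exactly the form already written in \eqref{eq:app_cv_primal_general}--\eqref{eq:app_cv_dual_general}. The first thing I will check is that the bilinear term $\langle\deltaadv,\vdual\rangle$ in the composite form of Proposition~\ref{prop:cv_structure} is indeed $\langle K\deltaadv,\vdual\rangle$ with $K=I$; this confirms that the identity coupling is the correct instantiation.

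For the primal line, I will use $f=-J$. Since $J$ is differentiable, $\nabla f(\deltaadv_t)=-\nabla J(\deltaadv_t)$, so the argument $\deltaadv_t-\alpha\nabla f(\deltaadv_t)-\alpha\vdual_t$ becomes $\deltaadv_t+\alpha\nabla J(\deltaadv_t)-\alpha\vdual_t$. I will then invoke Proposition~\ref{prop:prox_maps} to replace $\mathrm{prox}_{\alpha g}$ by $\proj_{\Q}$. This is legitimate because $\Q$ is nonempty, closed and convex: it is the intersection of an $L_\infty$-ball with a translated box, and it contains $\bm{0}$ since $\xclean\in[0,1]^d$. The result is \eqref{eq:app_exact_primal}.

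For the dual line, I will again invoke Proposition~\ref{prop:prox_maps}, now with $h^*=\iota_{\Qone}$ as established in Proposition~\ref{prop:conjugate_linf}, to replace $\mathrm{prox}_{\beta h^*}$ by $\proj_{\Qone}$. This yields \eqref{eq:app_exact_dual}. I will also note that the step size $\beta$ does not affect the set being projected onto, since $\beta\iota_{\Qone}=\iota_{\Qone}$.

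Nothing here is a genuine obstacle. The only point needing care is the sign bookkeeping in the primal step: minimizing $-J$ turns the descent step on $f$ into an ascent step on $J$. I will also remark that the practical Algorithm~\ref{alg:st} is recovered by replacing $\nabla J(\deltaadv_t)$ with the base direction $\bm{u}_t$ and adding the relaxation steps \eqref{eq:st_relax_primal}--\eqref{eq:st_relax_dual}. These modifications are outside the scope of the exact-gradient statement and are not part of the proof.
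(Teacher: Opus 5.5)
Your proposal is correct and matches the paper's proof, which is the one-line observation that the iteration follows by direct substitution of $K=I$, $\nabla f=-\nabla J$, and the projection identities of Proposition~\ref{prop:prox_maps} into the generic Condat--V\~{u} update. Your additional checks are accurate details the paper leaves implicit: that $\Q$ is nonempty, closed and convex, that $\beta\iota_{\Qone}=\iota_{\Qone}$, and that the sign flip turns descent on $f$ into ascent on $J$.
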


\begin{proof}
The claim is obtained by direct substitution into the generic Condat--V\~{u} update.
\end{proof}

The practical ST implementation departs from this exact form in two ways: (i) the exact gradient $\nabla J(\deltaadv_t)$ is replaced by a base attack direction $\bm{u}_t$ (e.g., momentum or transformation-averaged gradients), and (ii) an optional relaxation step is applied,
\begin{equation}
\deltaadv_{t+1}=(1-\rho)\deltaadv_t+\rho\bar{\delta}_{t+1},\qquad
\vdual_{t+1}=(1-\rho)\vdual_t+\rho\bar{\vdual}_{t+1}.
\end{equation}
This yields the inexact primal-dual scheme used by all ST variants (Algorithm~\ref{alg:st}).

\subsection{Regularization Scaling Convention}
\label{subsec:app_lambda_scaling}
The theoretical derivation expresses the dual-ball radius as an absolute quantity~$\lambda$, whereas in practice it is more convenient to specify a \emph{dimension-free} scalar
$\bar{\lambda}$ and let the implementation recover the absolute radius via
\begin{equation}
	\label{eq:app_lambda_scaling}
	\lambda = \bar{\lambda}\,d,
	\qquad d = C \times H \times W,
\end{equation}
where $d$ is the perturbation dimensionality ($d = 3 \times 224\times 224$ for ImageNet). This convention ensures that the per-coordinate budget $\lambda / d = \bar{\lambda}$ remains constant across input resolutions, so that a single $\bar{\lambda}$ value transfers between datasets without rescaling. Throughout the paper, the regularization-strength values reported in figures and tables refer to $\bar{\lambda}$, while the absolute dual-ball radius $\lambda$ entering $\Qone$ is recovered via~\eqref{eq:app_lambda_scaling}.

\subsection{Convergence Under an Idealized Smooth-Convex Surrogate Model}
This subsection states the classical Condat--V\~{u} convergence guarantee for an idealized smooth-convex surrogate model.

\begin{assumption}
\label{ass:smooth_convex}
The function $f(\deltaadv)=-J(\deltaadv)$ is convex on $\R^d$ and continuously differentiable with an $L$-Lipschitz continuous gradient.
\end{assumption}

\begin{assumption}
\label{ass:q_convex}
The set $\Q$ is nonempty, closed, and convex.
\end{assumption}

\begin{theorem}
\label{thm:cv_convergence}
Suppose Assumptions~\ref{ass:smooth_convex} and~\ref{ass:q_convex} hold, and let $\{\deltaadv_t,\vdual_t\}$ be generated by the exact-gradient ST iteration with $\rho=1$. If the step sizes satisfy
\begin{equation}
\label{eq:app_stepsize_condition}
\frac{1}{\alpha}-\beta>\frac{L}{2},
\end{equation}
then the iterate sequence $\{(\deltaadv_t,\vdual_t)\}_{t\ge 0}$ converges in $\R^d\times\R^d$ to a primal-dual solution $(\deltaadv^*,\vdual^*)$ of the saddle problem
\begin{equation}
\label{eq:app_saddle}
\min_{\deltaadv\in\R^d}\max_{\bm{y}\in\R^d}\;
f(\deltaadv)+\iota_{\Q}(\deltaadv)+\langle \deltaadv,\bm{y}\rangle-\iota_{\Qone}(\bm{y}).
\end{equation}
Consequently, $\deltaadv^*$ solves the primal problem \eqref{eq:app_primal_problem}.
\end{theorem}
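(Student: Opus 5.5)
Since the exact-gradient ST iteration of Proposition~\ref{prop:exact_st_iteration} is the Condat--V\~{u} scheme with $K=I$, $g=\iota_{\Q}$ and $h^*=\iota_{\Qone}$, the plan is to reduce the statement to Condat's convergence theorem~\cite[Thm.~3.1]{condat2013primaldual} rather than reprove it from scratch. First we check the hypotheses of that theorem. By Assumption~\ref{ass:smooth_convex}, $f$ is convex and differentiable with $L$-Lipschitz gradient. By Assumption~\ref{ass:q_convex}, $g=\iota_{\Q}$ is proper, closed and convex. The regularizer $h=\lambda\|\cdot\|_\infty$ is proper, closed and convex, and Proposition~\ref{prop:conjugate_linf} identifies $h^*=\iota_{\Qone}$, with $\Qone$ a nonempty compact convex set. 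The linear operator is $K=I$, so $\|K\|^2=1$. Condat's step-size condition $\tfrac{1}{\tau}-\sigma\|K\|^2>\tfrac{L}{2}$, with $\tau=\alpha$ and $\sigma=\beta$, then becomes exactly \eqref{eq:app_stepsize_condition}. The relaxation $\rho=1$ lies in the admissible range. Proposition~\ref{prop:prox_maps} shows that the proximal steps are the two projections.

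Second, Condat's theorem requires the inclusion $0\in\nabla f+\partial g+K^*\partial h(K\,\cdot)$ to be solvable. We establish existence directly. The function $F=f+\iota_{\Q}+h$ is continuous on $\Q$. The set $\Q$ is compact, since it lies in the bounded box $[-\epsilon,\epsilon]^d$, and it is nonempty because $\bm 0\in\Q$ whenever $\xclean\in[0,1]^d$. Hence a minimizer $\deltaadv^*$ exists. To pass to the inclusion we need the sum rule $\partial F=\nabla f+N_{\Q}+\partial h$. This holds because $h$ is finite and continuous on all of $\R^d$, so the qualification condition $\operatorname{dom}h\cap\operatorname{int}\operatorname{dom}\cdots$ is trivially satisfied ~\cite{bauschke2017convex}. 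Picking $\vdual^*\in\partial h(\deltaadv^*)$ then yields a pair satisfying \eqref{eq:app_opt_primal}--\eqref{eq:app_opt_dual}, using $\vdual^*\in\partial h(\deltaadv^*)\iff\deltaadv^*\in\partial h^*(\vdual^*)$ (Fenchel--Young). Proposition~\ref{prop:fixed_point_form} shows such pairs are exactly the fixed points of the iteration.

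With these hypotheses in place, Condat's theorem gives $(\deltaadv_t,\vdual_t)\to(\deltaadv^*,\vdual^*)$, a saddle point of \eqref{eq:app_saddle}. For the final claim, we run the argument backwards. The saddle conditions give $\vdual^*\in\partial h(\deltaadv^*)$ and $0\in\nabla f(\deltaadv^*)+N_{\Q}(\deltaadv^*)+\partial h(\deltaadv^*)\subseteq\partial F(\deltaadv^*)$. By convexity of $F$, this means $\deltaadv^*$ minimizes \eqref{eq:app_primal_problem}. Equivalently, Proposition~\ref{prop:linf_support} lets us maximize out $\bm y$ in \eqref{eq:app_saddle} to recover the primal objective.

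The main obstacle is bookkeeping rather than analysis. Condat states the dual update with $K(2\bar x-x)$ and his step condition for the specific ordering of primal and dual steps. We must confirm that the ordering in \eqref{eq:app_exact_primal}--\eqref{eq:app_exact_dual} matches his Algorithm 3.1, which updates the primal first and then the dual at the extrapolated point. We must also confirm that with $\rho=1$ the iterates $(\deltaadv_{t+1},\vdual_{t+1})=(\bar\delta_{t+1},\bar\vdual_{t+1})$ coincide with his sequence. If the orderings differed, we would instead invoke V\~{u}'s variable-metric formulation~\cite{vu2013splitting}. That formulation casts the iteration as forward-backward splitting in the metric $\begin{pmatrix}I/\alpha & -I\\ -I & I/\beta\end{pmatrix}$, whose positive-definiteness together with cocoercivity of $\nabla f$ yields the same condition \eqref{eq:app_stepsize_condition}.
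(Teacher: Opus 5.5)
Your proposal is correct and follows the paper's route. Like the paper, you check the hypotheses of Condat's Theorem~3.1 with $K=I$, $\|K\|=1$ and the two projections as proximal maps, so that the step-size condition reduces to \eqref{eq:app_stepsize_condition}. The one real difference is the existence step. You build a primal--dual solution from a primal minimizer via the sum rule and Fenchel--Young, whereas the paper obtains a saddle point directly from Sion's minimax theorem on the compact set $\Q\times\Qone$. In your version, $\vdual^*$ must be the specific subgradient of $h$ appearing in the decomposition of $0\in\partial F(\deltaadv^*)$, not an arbitrary element of $\partial h(\deltaadv^*)$.

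The ordering check you flag does go through. Condat's Algorithm~3.1 is exactly a primal prox-gradient step, then a dual prox at $2\bar{\delta}_{t+1}-\deltaadv_t$, then relaxation. With $\rho=1$ it therefore coincides with the exact ST iteration, and the V\~{u} fallback is unnecessary.
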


\begin{proof}
Problem~\eqref{eq:app_primal_problem} has the standard Condat--V\~{u} composite form with $K=I$, $g=\iota_{\Q}$, and $h(\deltaadv)=\lambda\|\deltaadv\|_{\infty}$. By Assumption~\ref{ass:smooth_convex}, $f$ is convex and has an $L$-Lipschitz continuous gradient. By Assumption~\ref{ass:q_convex}, $g=\iota_{\Q}$ is proper, closed, and convex, and the set $\Q$ is compact as the intersection of an $L_\infty$-ball with the unit hypercube. The function $h$ is also proper, closed, and convex, and its conjugate $h^*=\iota_{\Qone}$ has the compact effective domain $\Qone$. Hence the Lagrangian
\begin{equation*}
\mathcal{L}(\deltaadv,\vdual)=f(\deltaadv)+g(\deltaadv)+\langle\deltaadv,\vdual\rangle-h^*(\vdual)
\end{equation*}
is convex-concave and continuous on the compact convex product $\Q\times\Qone$, so Sion's minimax theorem guarantees that the saddle problem~\eqref{eq:app_saddle} admits at least one solution. Since $K=I$, we have $\|K\|=1$, so the general Condat--V\~{u} step-size condition
\begin{equation}
\frac{1}{\alpha}-\beta\|K\|^2>\frac{L}{2}
\end{equation}
reduces exactly to~\eqref{eq:app_stepsize_condition}. Under this condition, Theorem~3.1 of Condat~\cite{condat2013primaldual} (equivalently, Theorem~3.1 of V\~{u}~\cite{vu2013splitting}) establishes that the exact-gradient primal-dual iteration~\eqref{eq:app_exact_primal}--\eqref{eq:app_exact_dual} converges to a saddle point of~\eqref{eq:app_saddle}. By Proposition~\ref{prop:optimality_conditions}, the corresponding primal component solves~\eqref{eq:app_primal_problem}.
\end{proof}

\subsection{Practical Interpretation of the \texorpdfstring{$\alpha\beta$}{alpha-beta} Heuristic}
This subsection relates the classical convex step-size condition to the practical coupling heuristic $\alpha\beta\lesssim 1$, and then isolates the pure primal-dual coupling to explain where that heuristic comes from. The step-size condition \eqref{eq:app_stepsize_condition} immediately implies
\begin{equation}
\label{eq:app_alpha_beta}
\alpha\beta < 1-\frac{\alpha L}{2}.
\end{equation}

\begin{remark}
\label{rem:alpha_beta_necessary}
The inequality $\alpha\beta<1$ is a necessary sanity check, but it is not sufficient for convergence unless the smoothness term is negligible. The actual safety margin is reduced by $\alpha L/2$.
\end{remark}

\begin{remark}
\label{rem:alpha_beta_safe}
A convenient sufficient rule is to choose $\alpha<1/L$ and $\alpha\beta\le 1/2$, with at least one inequality strict. Indeed, $\alpha<1/L$ implies $1/\alpha>L$, and $\alpha\beta\le 1/2$ implies $\beta\le 1/(2\alpha)$, hence
\begin{equation}
\frac{1}{\alpha}-\beta \ge \frac{1}{2\alpha} > \frac{L}{2}.
\end{equation}
\end{remark}

To pin down where the rule of thumb comes from, we isolate the pure primal-dual coupling, dropping the gradient term and the projections.

\begin{proposition}[Linearized primal-dual coupling]
\label{prop:linear_coupling}
Consider the simplified ST dynamics with $\rho=1$, inactive projections, and zero attack direction:
\begin{align}
\deltaadv_{t+1} &= \deltaadv_t - \alpha \vdual_t, \\
\vdual_{t+1} &= \vdual_t + \beta(2\deltaadv_{t+1}-\deltaadv_t).
\end{align}
Let $c=\alpha\beta$. Then the linear update matrix is
\begin{equation}
M=
\begin{bmatrix}
1 & -\alpha \\
\beta & 1-2\alpha\beta
\end{bmatrix},
\end{equation}
and for $0<c<1$ its eigenvalues are
\begin{equation}
\lambda_{\pm}=(1-c)\pm i\sqrt{c(1-c)},
\end{equation}
with modulus
\begin{equation}
|\lambda_{\pm}|=\sqrt{1-c}<1.
\end{equation}
\end{proposition}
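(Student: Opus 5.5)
The argument is a direct linear-algebra computation: eliminate the intermediate primal iterate to get the one-step matrix, then read the eigenvalues off the trace and determinant.

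\textbf{Step 1: the matrix $M$.} Substitute $\deltaadv_{t+1}=\deltaadv_t-\alpha\vdual_t$ into the dual update. The extrapolated point is
\begin{equation*}
2\deltaadv_{t+1}-\deltaadv_t=\deltaadv_t-2\alpha\vdual_t,
\end{equation*}
so
\begin{equation*}
\vdual_{t+1}=\beta\deltaadv_t+(1-2\alpha\beta)\vdual_t.
\end{equation*}
Stacking $(\deltaadv_t,\vdual_t)$ coordinatewise then gives exactly the stated $M$. In $\R^d$ the dynamics is $M\otimes I_d$, which has the same eigenvalues, so it suffices to treat a scalar coordinate.

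\textbf{Step 2: trace and determinant.} Compute
\begin{equation*}
\operatorname{tr}M=2-2c,\qquad \det M=(1-2c)+\alpha\beta=1-c.
\end{equation*}
The characteristic polynomial is therefore
\begin{equation*}
\mu^2-2(1-c)\mu+(1-c)=0,
\end{equation*}
with discriminant $4(1-c)^2-4(1-c)=-4c(1-c)$. This is negative for $0<c<1$, so the roots are complex conjugates $(1-c)\pm i\sqrt{c(1-c)}$.

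\textbf{Step 3: modulus.} For complex conjugate roots, $|\mu_\pm|^2=\det M=1-c$. Equivalently, $(1-c)^2+c(1-c)=1-c$. Hence $|\mu_\pm|=\sqrt{1-c}<1$ whenever $c>0$.

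There is no real obstacle here. The only care point is the bookkeeping in Step 1: the extrapolation must use the freshly computed $\deltaadv_{t+1}$, which is what produces the entry $1-2\alpha\beta$ rather than $1-\alpha\beta$. One should also keep the eigenvalue symbol $\lambda_\pm$ notationally separate from the regularization weight $\lambda$.
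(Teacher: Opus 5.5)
Your proposal is correct and follows essentially the same route as the paper: substitute the primal update into the dual update to obtain $M$, form the characteristic polynomial $\mu^2-2(1-c)\mu+(1-c)$, observe the negative discriminant $-4c(1-c)$, and compute $|\mu_\pm|^2=(1-c)^2+c(1-c)=1-c$. Your trace/determinant shortcut ($|\mu_\pm|^2=\det M$) and the Kronecker remark extending the scalar case to $\R^d$ are small additions that the paper omits.
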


\begin{proof}
Substituting the primal update $\deltaadv_{t+1}=\deltaadv_t-\alpha\vdual_t$ into the dual update yields
\begin{equation}
\vdual_{t+1}=\vdual_t+\beta\bigl(2(\deltaadv_t-\alpha\vdual_t)-\deltaadv_t\bigr)=\beta\,\deltaadv_t+(1-2\alpha\beta)\,\vdual_t.
\end{equation}
The recurrence can therefore be written as
\begin{equation}
\begin{bmatrix}
\deltaadv_{t+1} \\
\vdual_{t+1}
\end{bmatrix}
=
M
\begin{bmatrix}
\deltaadv_t \\
\vdual_t
\end{bmatrix},
\qquad
M=
\begin{bmatrix}
1 & -\alpha \\
\beta & 1-2\alpha\beta
\end{bmatrix}.
\end{equation}
Let $c=\alpha\beta$. The characteristic polynomial of $M$ is
\begin{equation}
\chi(\lambda)=\lambda^2-2(1-c)\lambda+(1-c).
\end{equation}
For $0<c<1$, the discriminant is
\begin{equation}
\Delta=4(1-c)^2-4(1-c)=-4c(1-c)<0,
\end{equation}
so the eigenvalues are the complex conjugate pair
\begin{equation}
\lambda_{\pm}=(1-c)\pm i\sqrt{c(1-c)}.
\end{equation}
Their squared modulus is
\begin{equation}
|\lambda_{\pm}|^2=(1-c)^2+c(1-c)=1-c,
\end{equation}
which is strictly smaller than $1$ whenever $0<c<1$.
\end{proof}

\begin{remark}
\label{rem:alpha_beta_rule_of_thumb}
Proposition~\ref{prop:linear_coupling} does not prove convergence of the full ST algorithm. It only analyzes the pure primal-dual coupling after removing the gradient term and the projections. It provides a qualitative explanation for the practical rule of thumb $\alpha\beta\lesssim 1$: for $c=\alpha\beta\in(0,1)$ the eigenvalues are complex conjugates with modulus $\sqrt{1-c}<1$, so the linearized dynamics are underdamped and contract toward zero; at $c=1$ the two eigenvalues merge at $0$; and for $c>1$ the eigenvalues become real, $\lambda_\pm=(1-c)\pm\sqrt{c(c-1)}$, with $\lambda_-$ reaching $-1$ exactly at $c=4/3$ and exceeding $1$ in modulus for $c>4/3$, at which point the linear dynamics are unstable. Thus $c=1$ marks the transition from complex oscillatory contraction to real-valued monotone modes, while $c=4/3$ is the sharp stability boundary of the pure linear coupling. The default $\alpha\beta\approx 1.004$ is therefore inside the overdamped regime but not yet unstable in the linearized model; the non-linear damping of $\proj_{\Q}$ and $\proj_{\Qone}$ provides an additional stability margin that is confirmed empirically.
\end{remark}

\begin{remark}
\label{rem:beta_160}
With the default step size $\alpha=1.6/255$, the reciprocal is
\begin{equation}
\frac{1}{\alpha}=\frac{255}{1.6}\approx 159.375.
\end{equation}
Therefore, choosing $\beta=160$ places STMI near the empirical coupling boundary suggested by $\alpha\beta\approx 1$. For STPGD, the primal step size is larger, $\alpha=2.0/255$, i.e., $1.25\times$ the default step size used by the other attacks. Keeping the same coupling heuristic then gives $\beta=128$ in our experiments. The smaller dual step in STPGD is therefore a direct consequence of its larger primal step rather than a separate hyperparameter choice.
\end{remark}

\subsection{Variants with an Adaptive Per-Coordinate Step}
\label{subsec:appendix_stmdcs}

Most ST variants expose a single ascent direction $\bm{u}_t$ that can be plugged into Algorithm~\ref{alg:st} without altering its state. However, the MDCS-based variants~\cite{tao2026mdcs} differ in one structural way: the base attack carries an additional per-coordinate adaptive step $\bm{d}_t\in\R^d$ inherited from the MDCS step-size rule, and this step is element-wise multiplied into $\bm{u}_t$ before the primal update. Algorithm~\ref{alg:st_mdcs} states the resulting iteration, which adds line~5 (the MDCS clipping update for $\bm{d}_t$, with element-wise $\min$, $|\cdot|$, and $\odot$) and replaces $\bm{u}_t$ by the masked direction $\bm{d}_{t+1}\odot\bm{u}_t$ inside the same $\proj_{\Q}$ primal step. The dual and relaxation steps are unchanged. The entries of $\bm{d}_{t+1}$ are non-increasing in $t$ and tighten on coordinates where $|\bm{u}_t|$ has accumulated, so the wrapper preserves the MDCS coordinate-wise damping while adding only the dual correction.

When $\bm{u}_t$ is the $\ell_1$-normalized momentum~\cite{dong2018mifgsm}, the resulting method is STMDCSMI; when $\bm{u}_t$ is the MEF aggregated gradient~\cite{qiu2026mef} obtained by averaging gradients over a neighborhood of conditionally sampled points around $\deltaadv_t$, it is STMDCSMEF. \textbf{Algorithm~\ref{alg:st_mdcs} reduces to Algorithm~\ref{alg:st} in the limit $\gamma\to+\infty$, in which $\bm{d}_t$ remains the all-ones tensor for all $t$.}

\begin{algorithm}[t]
\caption{ST+MDCS update}
\label{alg:st_mdcs}
\begin{algorithmic}[1]
\Require Clean image $\xclean$, label $\ytrue$, feasible set $\Q$, regularization strength $\lambda$ ($\Qone{=}\{\bm{z}:\|\bm{z}\|_1{\le}\lambda\}$), step sizes $\alpha,\beta$, MDCS clipping $\gamma$, relaxation $\rho$, iteration number $T$
\State Initialize $\deltaadv_0=0$, $\vdual_0=0$, $\bm{d}_0=\bm{1}$
\For{$t=0,\dots,T-1$}
\State $\bm{g}_t=\nabla_{\deltaadv}J(\deltaadv_t)$
\State Compute $\bm{u}_t$ from $\bm{g}_t$
\State $\bm{d}_{t+1}=\min\!\left(\tfrac{\gamma}{|\bm{u}_t|},\;\bm{d}_t\right)$
\State $\bar{\delta}_{t+1}=\proj_{\Q}\!\left(\deltaadv_t+\alpha\,(\bm{d}_{t+1}\odot\bm{u}_t)-\alpha\vdual_t\right)$
\State $\bar{\vdual}_{t+1}=\proj_{\Qone}\!\left(\vdual_t+\beta(2\bar{\delta}_{t+1}-\deltaadv_t)\right)$
\State $\deltaadv_{t+1}=(1-\rho)\deltaadv_t+\rho\bar{\delta}_{t+1}$
\State $\vdual_{t+1}=(1-\rho)\vdual_t+\rho\bar{\vdual}_{t+1}$
\EndFor
\State \Return $\xclean+\deltaadv_T$
\end{algorithmic}
\end{algorithm}

\section{Supplementary Evaluation and Reproducibility Details}
\label{sec:appendix_eval_details}

\subsection{Stealth Evaluation}
\label{subsec:appendix_stealth_eval}
This subsection briefly summarizes why stealth evaluation is inherently multi-metric and why the main text reports several complementary quality indices rather than a single score. Evaluating imperceptibility is fundamentally a multi-metric problem: individual quality indices can disagree on adversarial images and can themselves be attacked by optimization~\cite{ghildyal2023attackingiqa}. Full-reference metrics include LPIPS~\cite{zhang2018lpips} for deep-feature perceptual distance, CIEDE2000~\cite{sharma2005ciede2000} for colorimetric difference, and TOPIQ~\cite{chen2024topiq} for top-down semantic-aware quality. No-reference metrics such as NIQE~\cite{mittal2013niqe} and distributional measures such as FID~\cite{heusel2017fid} provide complementary viewpoints, although FID is sensitive to preprocessing and sample size on small adversarial sets. Adversarial-aware metrics such as R-LPIPS~\cite{ghazanfari2023rlpips}, together with direct attacks on LPIPS~\cite{ghildyal2023attackingiqa}, further show that no single perceptual model should dominate the stealth story. For this reason, the main text reports nine complementary indices spanning full-reference (PSNR, MS-SSIM, LPIPS, DISTS, TOPIQ-FR, LF), no-reference (NIQE, CLIP-IQA), and distributional (FID) metrics rather than relying on a single stealth score.

\subsection{Experimental Protocol and Reproducibility Details}
\label{sec:appendix_protocol}

This subsection records the minimal protocol details needed to interpret the reported metrics and qualitative controls.

\noindent\textbf{Stealth-metric computation.}
All metrics are computed on $224\times 224$ inputs. PSNR and MS-SSIM are computed on $[0,1]$ RGB tensors with \texttt{torchmetrics}\footnote{\url{https://github.com/Lightning-AI/torchmetrics}}. LPIPS uses the AlexNet variant of~\cite{zhang2018lpips} and is evaluated after rescaling images to $[-1,1]$. TOPIQ-FR~\cite{chen2024topiq} and DISTS~\cite{ding2020dists} are computed as full-reference metrics through \texttt{pyiqa}\footnote{\url{https://github.com/chaofengc/IQA-PyTorch}}; CLIP-IQA~\cite{wang2023clipiqa} and NIQE~\cite{mittal2013niqe} are computed as no-reference metrics on the adversarial image alone. LF is computed as the mean absolute difference between the Haar-LL subbands of the clean and adversarial images, following the wavelet-domain low-frequency notion of SSAH~\cite{luo2022ssah}. FID~\cite{heusel2017fid} is computed dataset-wise with \texttt{torchmetrics} using 2048-dimensional Inception features on uint8 images in $[0,255]$.

\begin{table}[h]
\caption{Per-attack $\bar\lambda$ used in the main results (Table~\ref{tab:cross_attack_eps16}). Each value is selected from the sweep $\bar\lambda \in [0.1, 5.0]$ to maximize the imperceptibility metrics on the 1000-image evaluation set.}
\label{tab:per_attack_lambda}
\centering
\small
\begin{tabular}{@{}l c@{}}
\toprule
ST variant & $\bar\lambda$ \\
\midrule
STPGD & 0.25 \\
STMI & 2.4 \\
STMDCSMI & 0.9 \\
STDIM & 2.3 \\
STGRA & 3.0 \\
STMDCSMEF & 0.3 \\
STOPS & 3.9 \\
\bottomrule
\end{tabular}

\end{table}

\subsection{Additional Qualitative Controls}
\label{subsec:appendix_visual}

This subsection provides two complementary full-page qualitative controls that read the same four attacks (MI-FGSM, STMI, MDCSMEF, STMDCSMEF) from two angles of the ASR--imperceptibility Pareto frontier. Figure~\ref{fig:appendix_same_eps_page_visual} fixes the perturbation budget at $\epsilon{=}16/255$ and varies the resulting ASR (the conventional same-$\epsilon$ protocol). Figure~\ref{fig:appendix_matched_asr_page_visual} instead fixes the operating ASR at approximately $50\%$ and lets each method use whatever $\epsilon$ it needs to reach that ASR; this is the matched-ASR slice of the Pareto frontier and is the operationally meaningful comparison once one accepts that transfer attacks differ in transferability. In both figures, each row shows the clean image and four attack columns; within each attack column, the upper half shows the adversarial image and the lower half shows the perturbation visualized as $5\times \delta + 0.5$, with clean and adversarial sub-rows kept aligned.

\begin{figure*}[t]
\centering
    \includegraphics[width=0.77\textwidth]{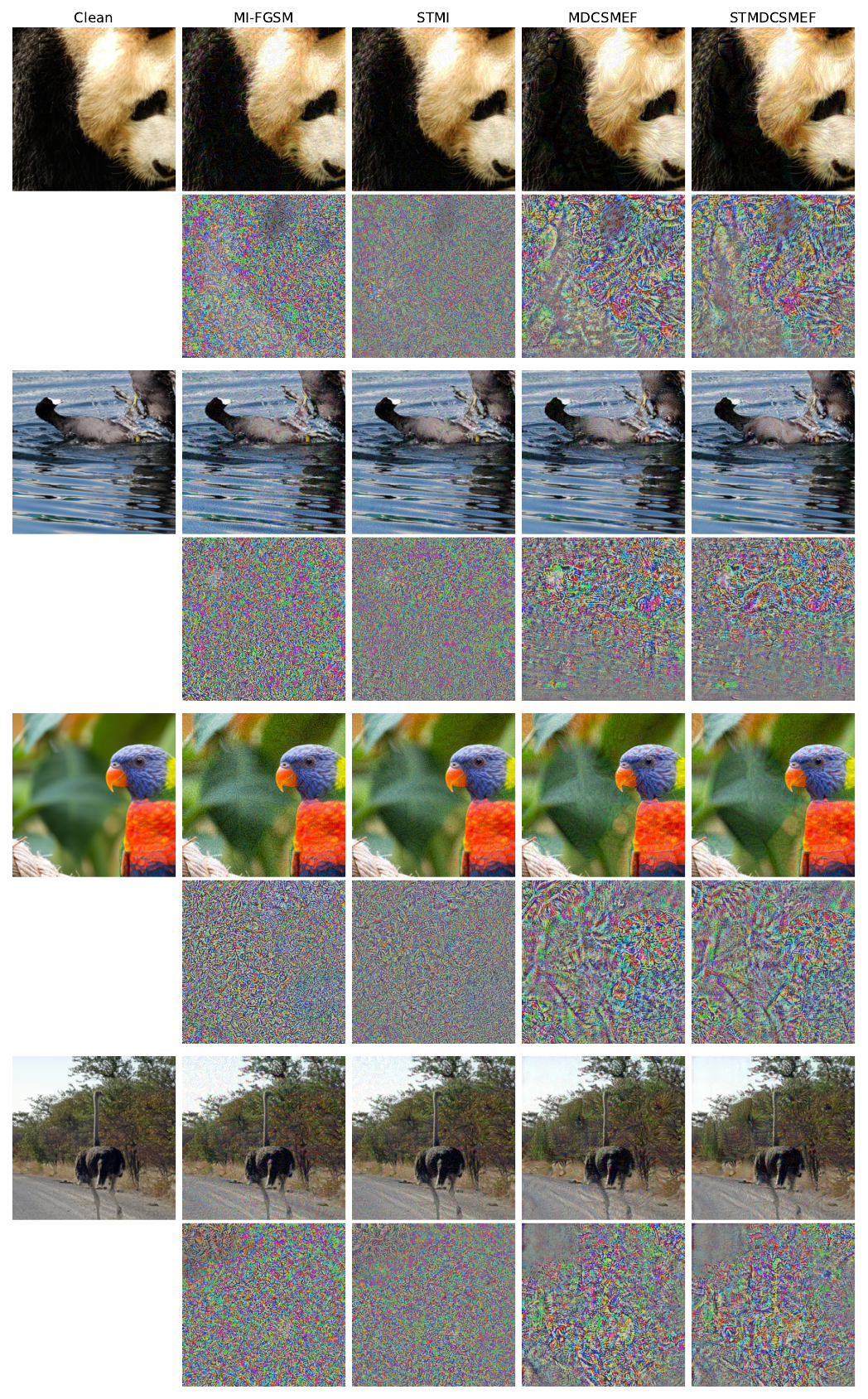}%
\caption{Full-page same-$\epsilon$ qualitative comparison at $\epsilon{=}16/255$ on four representative samples. Each row shows the clean image and four attack columns, where each attack cell contains the adversarial image on top and the perturbation visualized as $5\times \delta + 0.5$ on bottom; the clean image is kept flush with the adversarial sub-row for alignment. The ST variants remain visually cleaner at both the adversarial-image level and the perturbation level.}
\label{fig:appendix_same_eps_page_visual}
\end{figure*}

\begin{figure*}[t]
\centering
    \includegraphics[width=0.77\textwidth]{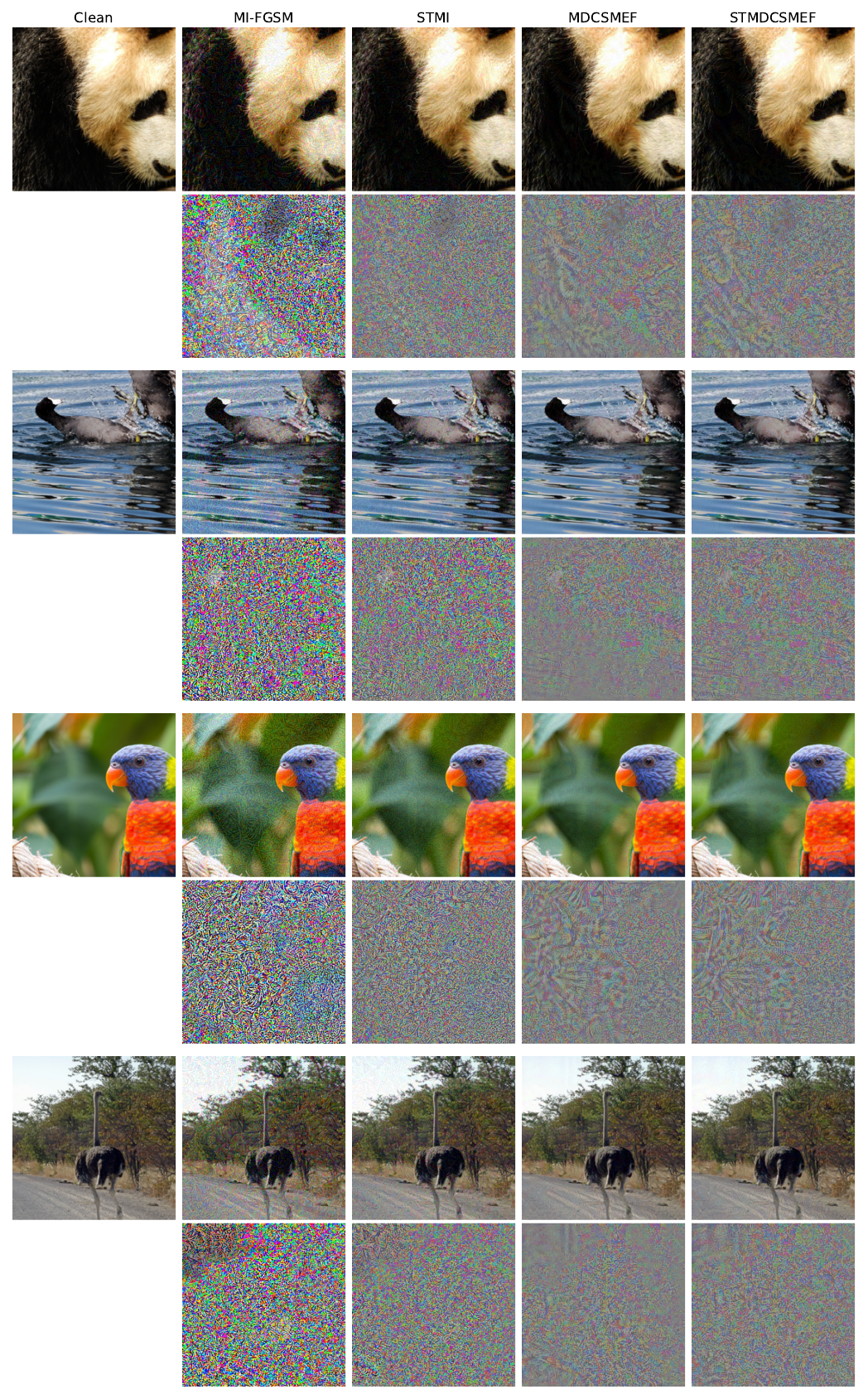}%
\caption{Full-page matched-ASR qualitative comparison at ASR$\approx 50\%$. MI-FGSM at $\epsilon{=}22/255$ (ASR $49.75\%$), STMI at $\epsilon{=}20/255$ (ASR $50.05\%$), MDCSMEF at $\epsilon{=}6/255$ (ASR $50.12\%$), STMDCSMEF at $\epsilon{=}6/255$ (ASR $50.38\%$).}
\label{fig:appendix_matched_asr_page_visual}
\end{figure*}
\clearpage

\end{document}